\documentclass{article}

\PassOptionsToPackage{numbers, compress}{natbib}
\usepackage[preprint]{neurips_2026}

\usepackage[utf8]{inputenc} % allow utf-8 input
\usepackage[T1]{fontenc}    % use 8-bit T1 fonts
\usepackage{hyperref}       % hyperlinks
\usepackage{url}            % simple URL typesetting
\usepackage{booktabs}       % professional-quality tables
\usepackage{amsfonts}       % blackboard math symbols
\usepackage{nicefrac}       % compact symbols for 1/2, etc.
\usepackage{microtype}      % microtypography
\usepackage{xcolor}         % colors

\usepackage[export]{adjustbox}
\usepackage[ruled]{algorithm2e}
\usepackage[inline, shortlabels]{enumitem}
\usepackage[T1]{fontenc}
\usepackage{hyperref}
\usepackage{microtype}
\usepackage{pifont}
\usepackage{xcolor}
\usepackage{xurl}
\usepackage{graphicx}
\usepackage{booktabs}
\usepackage{tabularray}
\usepackage{listings}
\usepackage{amsmath, amsfonts}
\usepackage{nicefrac}

\UseTblrLibrary{booktabs}

\makeatletter
\newcommand{\ssymbol}[1]{\@fnsymbol{#1}}
\newcommand{\romanNumeral}[1]{\expandafter\@slowromancap\romannumeral #1@}
\makeatother

\usepackage{physics}
\usepackage{mathtools}
\usepackage{amsthm}
\usepackage{amssymb}
\usepackage{subcaption}
\usepackage{wrapfig}

\usepackage{booktabs}
\usepackage{multirow}
\usepackage{makecell}
\usepackage{graphicx}
\usepackage[table]{xcolor}

\usepackage{xcolor}
\usepackage{pifont}
\usepackage{float}

\newcommand{\cmark}{\textcolor{green!60!black}{\ding{51}}}
\newcommand{\xmark}{\textcolor{red!70!black}{\ding{55}}}

\newcommand{\best}[1]{\textbf{\textcolor{blue!70!black}{#1}}}
\newcommand{\second}[1]{\underline{#1}}

\newtheorem{proposition}{Proposition}
\newtheorem{remark}{Remark}
\newcommand{\bS}{\mathbf{S}}
\newcommand{\bq}{\mathbf{q}}
\newcommand{\bk}{\mathbf{k}}
\newcommand{\bv}{\mathbf{v}}
\newcommand{\bx}{\mathbf{x}}
\newcommand{\bo}{\mathbf{o}}
\newcommand{\bz}{\mathbf{z}}
\newcommand{\bW}{\mathbf{W}}

\newcommand{\bt}{\mathbf{t}}

\newcommand{\bgamma}{\boldsymbol{\gamma}}

\newcommand{\cL}{\mathcal{L}}

\newcommand{\R}{\mathbb{R}}

\newtheorem{theorem}{Theorem}

\newtheorem{corollary}[theorem]{Corollary}

\title{HorizonStream: Long-Horizon Attention for Streaming 3D Reconstruction}

\author{
\normalfont
Chong Cheng\textsuperscript{1,2}
\quad
Peilin Tao\textsuperscript{2,3}
\quad
Nanjie Yao\textsuperscript{1}
\quad
Guanzhi Ding\textsuperscript{1}
\quad
Xianda Chen\textsuperscript{4}
\quad
Yuansen Du\textsuperscript{2}
\\
Xiaoyang Guo\textsuperscript{2}
\quad
Wei Yin\textsuperscript{2}
\quad
Weiqiang Ren\textsuperscript{2}
\quad
Qian Zhang\textsuperscript{2}
\quad
Zhengqing Chen\textsuperscript{2,\ensuremath{\ddagger}}
\quad
Hao Wang\textsuperscript{1,\ensuremath{\dagger}}
\\[0.5em]
\textsuperscript{1}HKUST(GZ)
\quad
\textsuperscript{2}Horizon Robotics
\quad
\textsuperscript{3}CASIA
\quad
\textsuperscript{4}CSU
\\[0.3em]
\ensuremath{^\dagger} Corresponding author
\quad
\ensuremath{^\ddagger} Project lead
}

\begin{document}

\maketitle
% ====================================================================
% ABSTRACT
% ====================================================================
\begin{abstract}
Online 3D reconstruction requires estimating camera pose and scene geometry under strict causal and bounded-memory constraints. Existing methods often suffer from drift, jitter, or collapse on long sequences. We trace these failures to a fundamental mismatch. Streaming geometry is inherently temporally heterogeneous, with evidence ranging from short-lived correspondences to persistent global scale. However, current architectures impose uniform and pathological influence patterns. For example, sliding windows enforce hard cutoffs, while ungated recurrence and causal attention cause cache saturation and spike-like attention sinks. To resolve this, we formalize geometric propagation as an \emph{evidence influence kernel} and propose HorizonStream, a long-horizon Transformer that explicitly factorizes this kernel. For the long-range temporal factor, Geometric Linear Attention learns channel-wise decay rates to enable bounded, multi-timescale propagation of geometric evidence. For the short-range spatial factor, Geometric Local Attention with Spatiotemporal RoPE performs reliable 3D matching while suppressing attention sinks. Finally, Metric Readout Tokens recover stable scale and rigid pose directly from the persistent geometric state. Extensive experiments show that HorizonStream, trained on only 48-frame clips, generalizes stably to sequences exceeding \(10{,}000\) frames with constant memory and linear time, achieving state-of-the-art streaming 3D reconstruction performance. Project Page: \url{https://3dagentworld.github.io/horizonstream/}

\end{abstract}

% ====================================================================
% 1. INTRODUCTION
% ====================================================================
\section{Introduction}
\label{sec:intro}

Online 3D reconstruction from streaming video is a core capability for robotics, autonomous driving, and embodied intelligence, requiring causal, bounded-memory estimation of camera pose and scene geometry. Classical methods~\citep{schonberger2016colmap,teed2021droid,campos2021orbslam3,teed2023dpvo,cheng2025graphguidedscenereconstructionimages} maintain explicit geometric states, but rely on iterative optimization and have limited throughput. Recent offline feed-forward methods~\citep{wang2024dust3r,leroy2024mast3r,wang2025vggt,shen2025fastvggt,deng2026vggtlongchunkitloop,zhang2026loger,hu2025vggt4dminingmotioncues,cheng2025reggsunposedsparseviews} achieve high accuracy, but use full attention and access future frames, violating online causality.

Strictly causal streaming 3D reconstruction still degrades on long sequences~\citep{cheng2026longstream}. Methods often suffer from collapse, pose jitter, and scale instability. This occurs because existing architectures organize history purely by recency. 

However, recency is a poor proxy for geometric relevance in 3D, as streaming geometry is inherently temporally heterogeneous. Recent evidence may already be invalid, while older evidence can remain reliable. Therefore, we view the reconstruction process as aggregating diverse types of geometric \textbf{\textit{evidence}}. This evidence has vastly different lifetimes. For example, local 2D-3D correspondences are short-lived, which quickly become invalid due to motion. In contrast, global scale and scene structures are persistent, which must remain reliable over long horizons. Yet, existing architectures impose a uniform propagation rule on all evidence. The key question is: \emph{how can we apply the correct temporal influence range for each type of geometric evidence?}

To answer this, we further formalize the temporal propagation of geometric information through an \textbf{\textit{evidence influence kernel}}. We define this kernel as a spatio-temporal weight function, which determines how much past geometric evidence should influence the current reconstruction state. Under this formulation, we find that existing methods inadvertently induce pathological kernels, as shown in Fig. \ref{teaser}. Sliding windows~\citep{lan2026stream3r,zhuo2026streamvggt} impose a hard-cutoff box kernel, which may prematurely discard useful past evidence. Refresh mechanisms~\citep{cheng2026longstream,cheng2025unposed3dgsreconstructionprobabilistic} create blockwise discontinuous kernels. Causal softmax attention~\citep{lingbotmap2026} degenerates into spike-like attention sinks, which focus on irrelevant early tokens. Ungated recurrence~\citep{chen2025ttt3r,wang2025cut3r} forms a heavy-tailed kernel with unbounded error accumulation. As sequences grow longer, these pathological kernels are repeatedly amplified. This causes cache saturation, early-token dominance, and severe geometric drift.

Consequently, current geometric transformer memory designs occupy two extremes of a retention spectrum. Sliding windows force immediate forgetting. Full-attention methods retain everything permanently. Both extremes lack a bounded, flexible temporal form. Instead, a proper approach should learn continuous retention rates tailored to each geometric channel.

\begin{figure}
\centering
\includegraphics[width=0.95\textwidth]{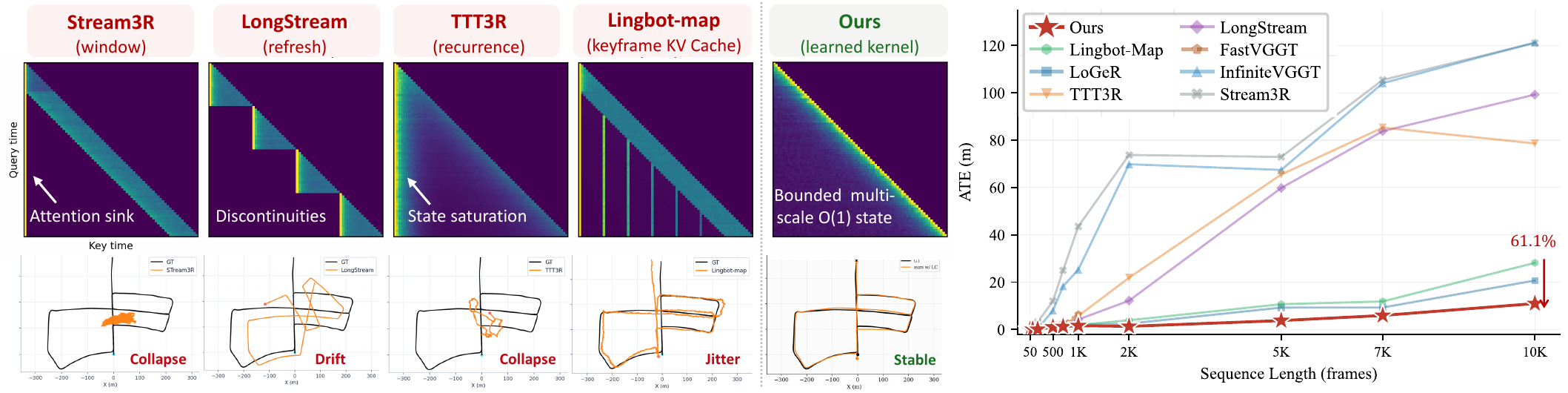}
\vspace{-6pt}
\caption{
Geometric evidence influence patterns on KITTI and long-sequence scaling on VBR.
Prior streaming methods impose hard cutoffs, refresh discontinuities, heavy-tailed states, or spiky KV caches, leading to pose degradation, jitter, or collapse.
HorizonStream learns a bounded multi-scale kernel with an \(O(1)\) recurrent geometric state and maintains stable ATE up to \(10\mathrm{K}\) frames.}
\label{teaser}
\vspace{-15pt}
\end{figure}

To this end, we propose \textbf{HorizonStream}, a long-horizon Transformer that explicitly instantiates this kernel factorization. For the long-range temporal factor, \textbf{Geometric Linear Attention} maintains a bounded \(O(1)\) recurrent state derived from a discounted geometric objective. By learning channel-wise exponential decay rates, it enables stable multi-timescale evidence propagation across windows. For the short-range spatial factor, \textbf{Geometric Local Attention} performs 3D content matching within the local window. It uses head-wise reliability gates to filter noisy correspondences and suppress attention sinks, while spatiotemporal RoPE provides relative 3D space-time position bias. Finally, to satisfy the metric invariance constraint, Metric Readout Tokens (MRT) and relative pose fusion recover stable scale and rigid pose directly from the high-retention subspace of the propagated state.

Since the proposed kernel is local and bounded, it defines a sequence-length-independent propagation rule that can be repeatedly applied to arbitrary-length streams. Experiments on multiple datasets show that HorizonStream, trained on only 48-frame clips, generalizes stably to tens of thousands of frames without pose degradation and outperforms all streaming 3D reconstruction methods.

Our contributions are:

\begin{itemize}[leftmargin=*,nosep]

\item We formalize streaming 3D reconstruction via a geometric evidence influence kernel. This view unifies common long-sequence failures as pathological kernel shapes, i.e., hard cutoffs, discontinuities, attention sinks, and cache saturation.

\item We propose HorizonStream, a constrained kernel-decomposition architecture. Geometric Linear Attention provides bounded multi-timescale propagation across windows; Geometric Local Attention with Spatiotemporal RoPE enables content-aware 3D matching within windows; MRT with relative pose fusion preserves metric scale and rigid pose.

\item Experiments on multiple datasets show that HorizonStream, trained only with 48-frame batches, generalizes to sequences over \(10{,}000\) frames with constant memory and linear time, achieving state-of-the-art streaming 3D reconstruction performance.

\end{itemize}

\begin{figure}
\centering
\includegraphics[width=0.9\textwidth]{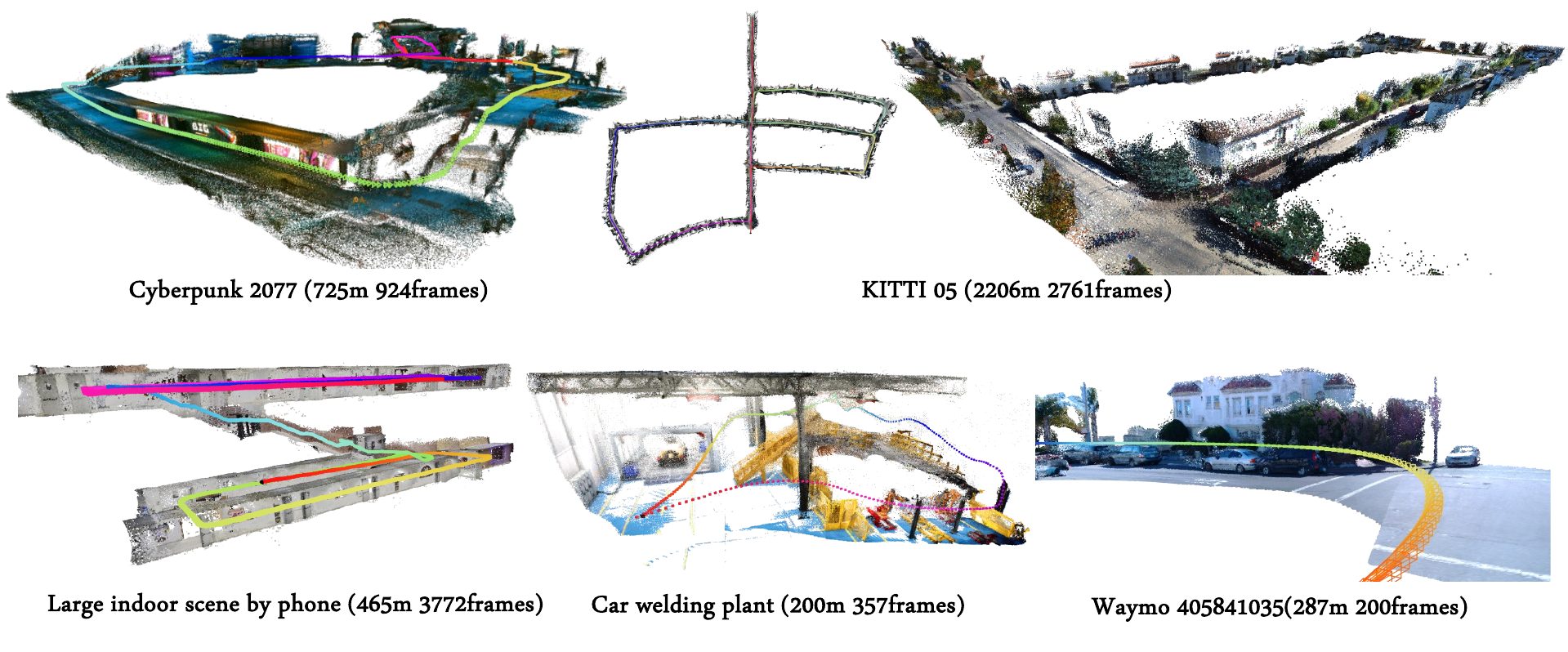}
\vspace{-12pt}
\caption{Visualization of long-range streaming 3D reconstruction across diverse scenes. Our method maintains stable trajectories and coherent geometry over sequences ranging from hundreds to thousands of frames in outdoor, indoor, and game environments.}
\label{fig:teaser_vis}
\vspace{-10pt}
\end{figure}

% ====================================================================
% 2. RELATED WORK
% ====================================================================
\section{Related Work}
\label{sec:related}

\noindent \textbf{Offline feed-forward 3D reconstruction.}
DUSt3R~\citep{wang2024dust3r,yu2025rgbonlygaussiansplattingslam} and MASt3R~\citep{leroy2024mast3r,cheng2025outdoormonocularslamglobal} predict dense geometry from image pairs. This paradigm extends to sequences via spatial memory in Spann3R and MonST3R~\citep{wang2025spann3r,zhang2025monst3r}, and to arbitrary image collections with a geometry-aware Transformer in VGGT~\citep{wang2025vggt}. FastVGGT~\citep{shen2025fastvggt} reduces inference memory by reusing attention maps. VGGT-Long~\citep{deng2026vggtlongchunkitloop} and LoGeR~\citep{zhang2026loger} scale to longer inputs through chunk-wise processing or accumulated weights. However, they usually rely on full attention within chunks and chunk stitching lacks cross-chunk dependency, causing temporal discontinuities.

\noindent \textbf{Online feed-forward 3D reconstruction.}
Recent methods adapt feed-forward reconstruction to causal streams. STream3R and StreamVGGT~\citep{lan2026stream3r,zhuo2026streamvggt} use causal masks and sliding-window attention, retaining only local-window context. CUT3R and TTT3R~\citep{wang2025cut3r,chen2025ttt3r} add persistent recurrent states, Point3R~\citep{wu2025point3r} maintains spatial pointer memory, InfiniteVGGT~\citep{yuan2026infinitevggt} prunes the KV cache, and Lingbot-map~\citep{lingbotmap2026} extends context with keyframe memory. These designs enable cross-window information transfer but rely on fixed or write-only temporal mechanisms and still suffer from jitter, pose degradation, and disordered geometry on long sequences.

LongStream~\citep{cheng2026longstream} attributes long‑sequence degradation to attention sink and state saturation~\citep{xiao2024streamingllm,gu2025sinkemerge}, but its periodic cache refresh discards accumulated context at each boundary, weakening long‑range revisit.

Therefore, we argue that a better online 3D reconstruction pipeline requires a bounded and multi-timescale control over geometric evidence influence. HorizonStream learns channel-wise propagation scales to preserve useful long-range geometry and down-weight stale evidence without cache reset.
% ====================================================================
% 3. METHOD
% ====================================================================

\section{Method}
\label{sec:method}
% ====================================================================

\noindent \textbf{Overview.}
Fig.~\ref{main-main} shows the HorizonStream framework. The model processes the most recent \(W\) frames causally and maintains an \(O(1)\) geometric state for cross-window structure and scale. Geometric Local Attention with Spatiotemporal RoPE handles within-window matching, Geometric Linear Attention performs cross-window propagation, and Metric Readout Tokens recover scale.

\subsection{Problem Formulation}

Given an RGB video, streaming 3D reconstruction predicts pose \(\hat{\mathbf{T}}_t\in SE(3)\) and dense depth \(\hat{D}_t\) online from past observations and a bounded state. We describe how past evidence affects the current reconstruction with a geometric evidence influence kernel \(K(t,i)\), which maps evidence at time \(i\) to its contribution at time \(t\).

\begin{figure}[t]
\centering
\includegraphics[width=\textwidth]{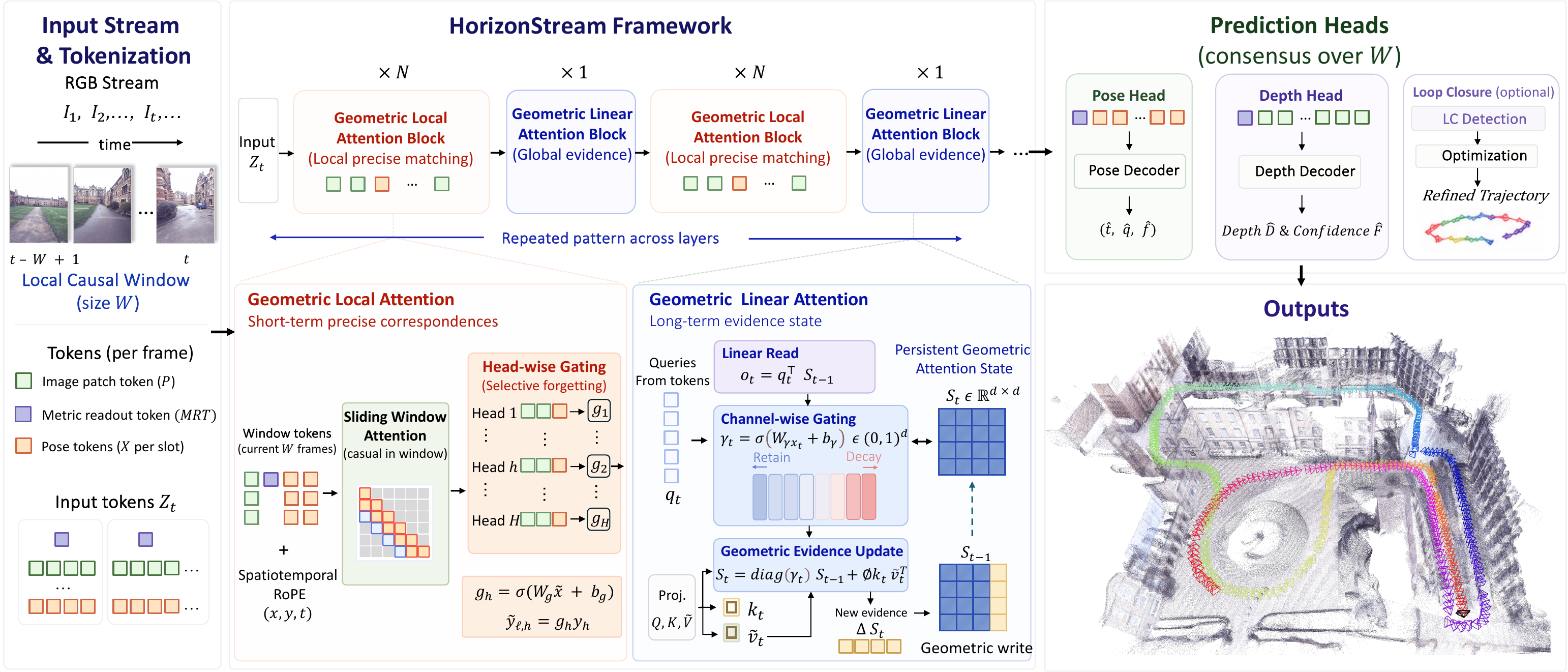}
\vspace{-8pt}
\caption{Overview of \textbf{HorizonStream}.
Given an RGB stream, the model causally processes the most recent \(W\) frames. Geometric Local Attention handles local matching, Geometric Linear Attention propagates long-range geometry with an \(O(1)\) recurrent geometric state, and Metric Readout Tokens recover stable scale and pose. An optional loop-closure module refines the trajectory.}
\label{main-main}
\vspace{-15pt}
\end{figure}
A valid geometric evidence influence kernel must solve three core problems: 1) select reliable local correspondences based on spatial content, 2) ensure bounded multi-timescale propagation to prevent state accumulation while respecting diverse evidence lifetimes, and 3) preserve scale and rigid pose. 

To systematically address these requirements, we decouple the influence mechanism into a spatio-temporal kernel factorization augmented by a metric readout. We factorize the kernel as:
\begin{equation}
\label{eq:kernel_factorization}
K(t,i)
=
K_{\mathrm{spatial}}(t,i)
\cdot
K_{\mathrm{time}}(t,i).
\end{equation}

This factorization explicitly maps the three problems to dedicated computational components. First, \(K_{\mathrm{spatial}}\) addresses spatial content-awareness (Problem 1). It uses image content and 3D proximity to select reliable short-range evidence. Second, \(K_{\mathrm{time}}\) addresses bounded multi-timescale propagation (Problem 2). It uses channel-wise exponential decay to keep long-range influence bounded while allowing different geometric channels to propagate over distinct temporal horizons. Finally, Metric Readout Tokens operate on the high-retention channels of this kernel to recover stable scale and rigid pose (Problem 3).

Together, these components form a complete, strictly causal streaming architecture. We now detail how this theoretical framework is instantiated into our network architecture. Section~\ref{subsec:ggla} introduces Geometric Linear Attention to model the temporal factor \(K_{\mathrm{time}}\). Section~\ref{sec:local} introduces Geometric Local Attention to model the spatial factor \(K_{\mathrm{spatial}}\). Analysis of why open-form operators fail these constraints is provided in Appendix~\ref{app:softmax_dilution} and~\ref{app:theory}.

\subsection{Geometric Linear Attention}
\label{subsec:ggla}

The long-range temporal factor functions as an online geometric estimator over key-value encoded geometric evidence, including correspondence, motion, structure, and scale cues. It summarizes this evidence in a bounded cross-window state, revises stale information, and preserves long-lived geometry. We formulate this through a discounted geometric state-estimation objective:
\begin{equation}
\label{eq:objective}
\mathcal{J}_t(\bS)=
\sum_{i=1}^{t}
\Bigl(\prod_{j=i+1}^{t}\gamma_j\Bigr)
\|\bS^\top \bk_i-\bv_i\|_2^2,
\qquad
K_{\mathrm{time}}(t,i)=\prod_{j=i+1}^{t}\gamma_j .
\end{equation}
Here, \(\bS \in \mathbb{R}^{d \times d}\) is the recurrent geometric state. The vectors \(\bk_i\) and \(\bv_i\) are the key and value encoding the geometric evidence at time \(i\). The variable \(\gamma\) acts as a learned gating factor for information retention. Specifically, \(\gamma_t\) denotes the retention rate at time index \(t\), and \(\gamma_j\) represents the intermediate retention rate at a specific step \(j\) within the cumulative product. With \(\gamma_t\equiv1\), evidence never decays. This causes heavy-tailed accumulation and state contamination. With \(\bar{\gamma}=\sup_t|\gamma_t|<1\), the influence of stale evidence is strictly bounded:
\begin{equation}
\label{eq:stability}
\Bigl\|
\bq_t^\top
\Bigl(\prod_{j=1}^{t}\gamma_j\Bigr)
\bS_0
\Bigr\|
\leq
\|\bq_t\| \cdot \|\bS_0\|_F \cdot \bar{\gamma}^{\,t}
\to 0 .
\end{equation}
In this bound, \(\bq_t\) is the query vector at time \(t\), and \(\bS_0\) is the initial state. The term \(\|\cdot\|_F\) denotes the Frobenius norm. Thus, discounting closes the open-form temporal influence that causes unbounded accumulation.

\noindent \textbf{Online state update.}
The objective admits the recursive form
\begin{equation}
\label{eq:recursive_objective}
\mathcal{J}_t(\bS)
=
\gamma_t\mathcal{J}_{t-1}(\bS)
+
\|\bS^\top\bk_t-\bv_t\|_2^2 .
\end{equation}
This principle yields a fixed-state attention update:
\begin{equation}
\label{eq:linear_update}
\bS_t=
\gamma_t\bS_{t-1}
+
\phi(\bk_t)\tilde{\bv}_t^\top,
\qquad
\bo_t=\bq_t^\top\bS_t .
\end{equation}
Here \(\bS_t\in\mathbb{R}^{d\times d}\) summarizes cross-window reconstruction evidence, \(\phi(\bk_t)\) maps keys into the linear attention feature space, and \(\tilde{\bv}_t\) denotes the value update written into the state.

\noindent \textbf{Channel-wise geometric retention.}
The scalar retention factor \(\gamma_t\) assigns a single lifetime to all evidence, which is insufficient for streaming geometry: local correspondences are short-lived, motion cues persist over moderate horizons, scene structure should survive across windows, and metric scale must remain stable over long sequences. We therefore replace \(\gamma_t\) with a channel-wise retention vector:
\begin{equation}
\label{eq:gate_update}
\bgamma_t=\sigma(\bW_\gamma\bx_t+\mathbf{b}_\gamma)\in(0,1)^d,
\qquad
\bS_t=
\mathrm{diag}(\bgamma_t)\bS_{t-1}
+
\phi(\bk_t)\tilde{\bv}_t^\top .
\end{equation}
Each channel \(c\) then has its own temporal influence factor and effective retention horizon:
\begin{equation}
\label{eq:channel_kernel_horizon}
K_{\mathrm{time}}^{(c)}(t,i)
=
\prod_{j=i+1}^{t}\gamma_j^{(c)},
\qquad
\tau^{(c)}
=
-\frac{1}{\log \bar{\gamma}^{(c)}} .
\end{equation}
Low-\(\gamma\) channels rapidly revise transient correspondence evidence, while high-\(\gamma\) channels preserve long-lived structure and metric cues. The learned \(\bgamma\) spectrum thus defines a family of geometric evidence influence horizons.

\noindent \textbf{Relation to TTT and linear attention.}
Eq.~\eqref{eq:gate_update} admits an online-learning interpretation: the state adapts to incoming geometric evidence, similar to Test-Time Training (TTT). Explicit per-frame TTT optimization is costly for ultra-long streams, while TTT with KV binding admits an equivalent linear-attention form~\citep{liu2026tttla}. This links online adaptation to efficient recurrent attention and places our update within the family of gated linear attention mechanisms~\citep{katharopoulos2020linear,yang2024gateddeltanet,kimilinear2025}.

HorizonStream achieves this online recurrent form through a geometric state \(\bS_t\) and channel-wise retention \(\bgamma_t\): \(\bS_t\) summarizes cross-window reconstruction evidence, while \(\bgamma_t\) controls the temporal influence of each geometric channel. This yields an adaptive, efficient, and bounded recurrent update for long-range geometric propagation. Appendix~\ref{app:softmax_dilution} analyzes the long-sequence degradation of causal softmax attention and ungated recurrence.
\subsection{Geometric Local Attention}
\label{sec:local}

Geometric Linear Attention propagates compressed cross-window evidence, but accurate local reconstruction still requires fine-grained correspondences within each window. We instantiate the short-range spatial factor \(K_{\mathrm{spatial}}\) with Geometric Local Attention, which selects local evidence using image content and relative 3D layout before it enters the long-range state.

\noindent \textbf{Head-wise output gating.}
To make the spatial kernel robust to sink-like concentration and noisy matches, we assign each attention head a reliability gate~\citep{qiu2025gatedattn}. For head \(h\),
\begin{equation}
\label{eq:headgate}
g_h = \sigma(\bW_g \bar{\bx} + b_g),
\qquad
\tilde{\mathbf{y}}_h = g_h \cdot \mathbf{y}_h ,
\end{equation}
where \(\bar{\bx}\) is the mean-pooled window feature, \(\mathbf{y}_h\) is the head output, \(\sigma\) is the sigmoid function, and \(\bW_g, b_g\) are learnable projection parameters. The gate downweights unreliable heads and preserves heads that support local matching.

\noindent \textbf{Spatiotemporal RoPE.} We extend RoPE~\citep{su2024rope} to three axes (time, height, and width) to encode relative spatiotemporal layout. For a patch at frame \(t\) and spatial location \((y,x)\), we set \(\pi=(t+1,y+1,x+1)\), split query and key vectors into three parts, and rotate each part along one axis. This makes attention depend on relative space-time offsets. We periodically reset the temporal index to avoid unbounded positional growth, while MRT and pose tokens use \(\pi=(0,0,0)\). Together, gating controls head reliability and Spatiotemporal RoPE supplies relative geometric structure.

\noindent \textbf{Metric Readout Tokens (MRT) and relative pose fusion.}
Long streaming reconstruction requires metric scale and pose to remain consistent across windows. Inspired by scale-token and metric-prediction designs~\citep{cheng2026longstream,keetha2026mapanythinguniversalfeedforwardmetric}, MRT participates in Geometric Linear Attention and reads metric scale from high-retention channels of the recurrent geometric state, extending metric readout from local context to sequence-level evidence.

Each frame includes a learned Metric Readout Token \(\bz^{\mathrm{metric}}\). A scale head predicts \(\hat{s}=\exp(g(\bz^{\mathrm{metric}}))\), which rescales translation and depth:
\begin{equation}
\label{eq:scale}
\hat{\bt}=\hat{s}\cdot\hat{\bt}^{\,\mathrm{raw}},
\qquad
\hat{D}=\hat{s}\cdot\hat{D}^{\mathrm{raw}} .
\end{equation}

For pose, we use relative pose fusion over pose tokens in the local window. A transformer head jointly attends to these tokens and estimates a consensus relative pose for the current frame with respect to the window context. This avoids relying on sequential keyframe chaining~\citep{cheng2026longstream}, where composition errors accumulate over long rollouts. Depth is produced by a DPT head with scale injection.
\subsection{Architecture}
\label{sec:arch}

\noindent \textbf{Backbone.} HorizonStream uses a ViT-L backbone initialized from VGGT~\citep{wang2025vggt} and DINOv2~\citep{oquab2024dinov2learningrobustvisual}. Each frame contains image patch tokens, pose tokens, and a Metric Readout Token. The backbone alternates frame blocks and global blocks: frame blocks perform intra-frame self-attention, while global blocks adopt a hybrid temporal design that combines Geometric Local Attention for dense intra-window tracking with Geometric Linear Attention layers interleaved at specific depths for cross-window memory updates.

\noindent \textbf{Training objective.}
The model is supervised with pose, depth, and scale losses:
\begin{equation}
\cL = \lambda_{pose}\,\cL_{pose} + \lambda_{depth}\,\cL_{depth} + \lambda_{scale}\,\cL_{scale}.
\end{equation}
Translation and depth are normalized by geometric scale factors. Depth loss is SmoothL1 with confidence weighting. Scale loss applies only on metric-scale samples.

\noindent \textbf{Loop closure.}
To correct long-term accumulated drift during inference, an optional loop-closure module improves global revisit consistency. Inspired by VGGT-Long~\citep{deng2026vggtlongchunkitloop}, we retrieve revisited frame pairs from stored early-layer DINOv2 features. The retrieved candidates are re-fed into the network to estimate local geometric corrections. These are then converted into loop constraints to optimize the final global trajectory via pose graph optimization.

\begin{figure}[t]
\centering
\includegraphics[width=\textwidth]{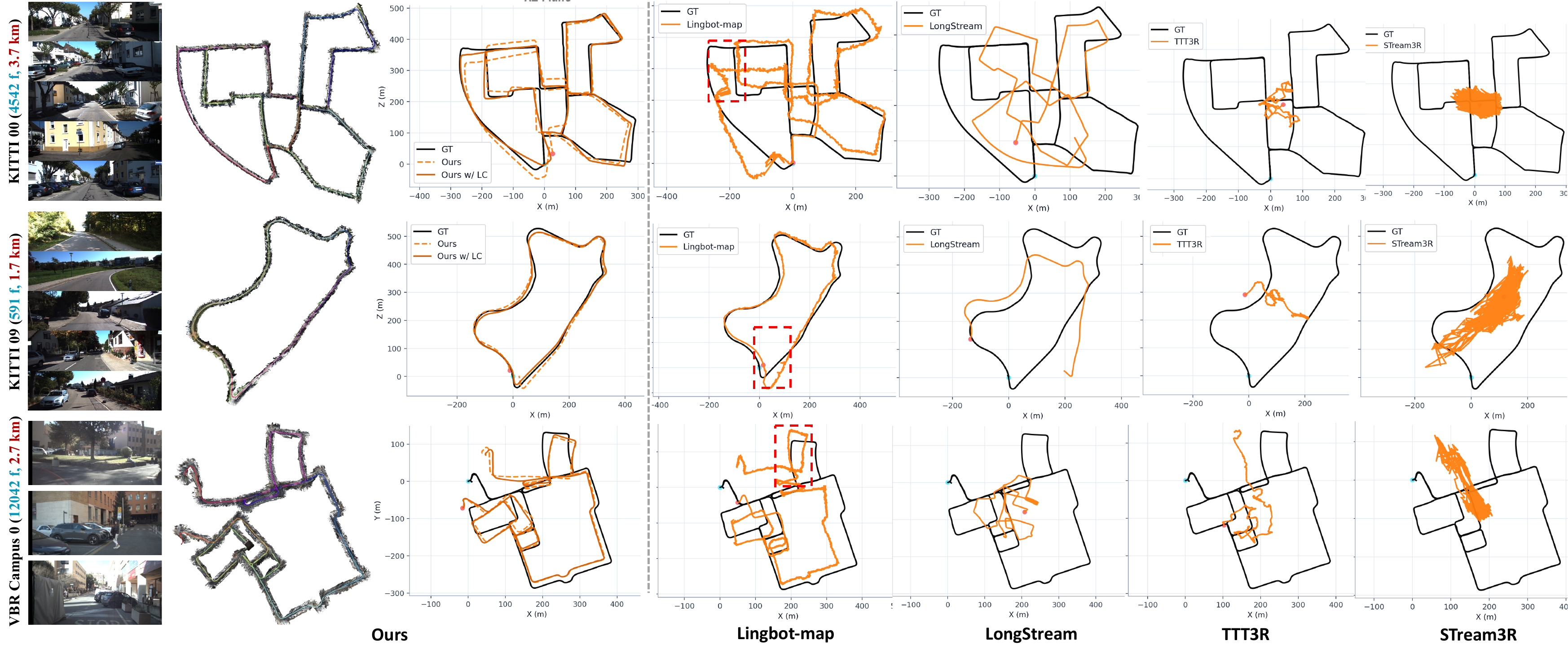}
\vspace{-18pt}
\caption{\textbf{Qualitative comparison} on long-sequence 3D reconstruction.
As sequence length grows, existing methods show pose degradation, drift, or collapse.
Lingbot-map exhibits progressively stronger pose jitter over longer rollouts, while HorizonStream maintains stable pose estimation.}
\label{fig:vis1}
%\vspace{-15pt}
\vspace{-15pt}
\end{figure}
% ====================================================================

% ====================================================================
% 4. EXPERIMENTS
% ====================================================================
\begin{table*}[t]
\centering
\caption{
Quantitative comparison on KITTI. We report mean ATE; ``--'' denotes OOM or repeated tracking failure, and LoGeR$^{*}$ denotes optimization-based LoGeR. Refresh/no-refresh variants degrade on long sequences, while trained with 48-frame batches, HorizonStream outperforms all streaming methods and approaches or surpasses offline methods with or without loop closure (LC).
}
\vspace{-5pt}
\label{tab:kitti_ate}
\setlength{\tabcolsep}{2.8pt}
\renewcommand{\arraystretch}{1.10}
\newcommand{\seqinfo}[2]{%
{\scriptsize\textcolor{cyan!70!black}{#1 fr.}}\\[-1pt]
{\scriptsize\textcolor{red!70!black}{#2 km}}%
}
\resizebox{0.8\textwidth}{!}{
\begin{tabular}{c l |ccccccccccc|c}
\toprule
& \multirow{3}{*}{Methods}
& \multicolumn{12}{c}{KITTI ATE $\downarrow$} \\
\cmidrule(lr){3-13}
&
& \makecell{\textbf{00}\\\seqinfo{4542}{3.7}}
& \makecell{\textbf{01}\\\seqinfo{1101}{2.5}}
& \makecell{\textbf{02}\\\seqinfo{4661}{5.1}}
& \makecell{\textbf{03}\\\seqinfo{801}{0.6}}
& \makecell{\textbf{04}\\\seqinfo{271}{0.4}}
& \makecell{\textbf{05}\\\seqinfo{2761}{2.2}}
& \makecell{\textbf{06}\\\seqinfo{1101}{1.2}}
& \makecell{\textbf{07}\\\seqinfo{1101}{0.7}}
& \makecell{\textbf{08}\\\seqinfo{4071}{3.2}}
& \makecell{\textbf{09}\\\seqinfo{1591}{1.7}}
& \makecell{\textbf{10}\\\seqinfo{1201}{0.9}}
& Avg. \\
\midrule

\multirow{6}{*}{\rotatebox[origin=c]{90}{Opt.-based}}
& MASt3R-SLAM & -- & 530.37 & -- & 18.87 & 88.98 & 159.43 & 92.00 & -- & 263.75 & -- & 153.07 & 186.64 \\
& VGGT-SLAM & -- & 607.16 & -- & 169.83 & 13.12 & -- & -- & -- & -- & -- & 211.82 & 250.48 \\
& COLMAP & 139.12 & 3.83 & 71.99 & 1.46 & 112.77 & 20.37 & 10.95 & 7.80 & 21.72 & 21.19 & 4.52 & 37.79 \\
& MASt3R-SfM & -- & 463.52 & -- & 15.80 & 41.44 & 150.39 & 136.14 & 71.69 & -- & 176.36 & 69.50 & 140.60 \\
& DPVO & 113.11 & 16.60 & 113.01 & 2.46 & 0.98 & 59.34 & 55.91 & 19.30 & 110.63 & 74.55 & 13.71 & 52.69 \\
& DROID-SLAM & -- & 82.81 & -- & 3.20 & 1.47 & 73.50 & 61.10 & 18.41 & 104.22 & 89.49 & 22.19 & 50.71 \\

\midrule

\multirow{5}{*}{\rotatebox[origin=c]{90}{Offline Fwd.}}
& VGGT-Long & 8.64 & 61.21 & 52.72 & 8.78 & 4.20 & 9.88 & 4.67 & 2.66 & 72.98 & 31.84 & 27.71 & 25.94 \\
& FastVGGT & -- & 705.39 & -- & 62.38 & 10.27 & 157.74 & 124.43 & 69.27 & -- & 190.10 & 194.75 & 189.29 \\
& LoGeR & 54.98 & 36.57 & 36.20 & 4.27 & 1.62 & 33.41 & 11.78 & 13.33 & 22.92 & 17.89 & 8.06 & 21.91 \\
& LoGeR$^{*}$ & 26.19 & 41.26 & 32.21 & 5.02 & 1.62 & 22.65 & 5.49 & 5.04 & 21.96 & 9.03 & 9.44 & 16.35 \\
& LoGeR w/o refresh & 166.05 & 631.14 & 226.65 & 66.09 & 4.55 & 125.16 & 98.32 & 12.38 & 203.24 & 127.28 & 185.19 & 167.82 \\

\midrule

\multirow{10}{*}{\rotatebox[origin=c]{90}{Online Fwd.}}
& CUT3R w/o refresh & 185.89 & 651.52 & 296.98 & 148.06 & 22.17 & 155.61 & 132.54 & 77.03 & 238.39 & 205.94 & 193.39 & 209.78 \\
& CUT3R w/ refresh & 190.38 & 90.59 & 264.39 & 20.40 & 7.31 & 92.25 & 67.54 & 22.48 & 145.08 & 67.42 & 40.00 & 91.62 \\
& TTT3R w/o refresh & 190.93 & 546.84 & 218.77 & 105.28 & 11.62 & 153.12 & 132.94 & 70.95 & 180.57 & 211.01 & 133.00 & 177.73 \\
& TTT3R w/ refresh & 119.94 & 99.59 & 238.07 & 16.83 & 3.98 & 36.38 & 47.20 & 11.62 & 107.33 & 86.96 & 33.58 & 72.86 \\
& STream3R & 190.98 & 681.95 & 301.40 & 158.25 & 102.73 & 159.85 & 135.03 & 90.37 & 261.15 & 216.31 & 207.49 & 227.77 \\
& StreamVGGT & 191.93 & 653.06 & 303.35 & 157.50 & 108.24 & 160.46 & 133.71 & 89.00 & 263.95 & 216.69 & 209.80 & 226.15 \\
& InfiniteVGGT & 167.17 & 533.36 & 272.99 & 149.18 & 58.86 & 127.50 & 100.54 & 78.77 & 196.66 & 199.25 & 138.04 & 183.85 \\
& LongStream & 92.55 & \second{46.01} & 134.70 & \second{3.81} & 1.95 & 84.69 & 23.12 & 14.93 & 62.07 & 85.61 & 21.48 & 51.90 \\
& Lingbot-map & 30.80 & 64.74 & \second{82.29} & \best{2.49} & \second{0.85} & 16.55 & \second{6.27} & 8.92 & \second{39.32} & \best{17.99} & \best{7.96} & 25.29 \\
\cmidrule(lr){2-14}
& \textbf{Ours} & \second{26.40} & \best{20.62} & 84.62 & 5.15 & \best{0.62} & \second{12.82} & \best{4.59} & \second{5.49} & \best{19.49} & 25.73 & \second{11.71} & \second{19.75} \\
& \textbf{Ours w/ LC} & \best{13.91} & \best{20.62} & \best{69.43} & 5.15 & \best{0.62} & \best{6.86} & 6.50 & \best{2.67} & \best{19.49} & \second{23.86} & \second{11.71} & \best{16.44} \\
\bottomrule
\end{tabular}
}
\vspace{-15pt}
\end{table*}

\section{Experiments}
\label{sec:exp}

\subsection{Experimental Setup}
\noindent \textbf{Datasets.}
We evaluate on KITTI~\citep{geiger2012kitti}, vKITTI2~\citep{cabon2020vkitti2}, Oxford Spires~\citep{tao2025oxford}, ScanNet++~\citep{yeshwanth2023scannetpp}, TUM RGB-D~\citep{sturm2012tum}, Waymo Open~\citep{sun2020waymo}, VBR~\citep{brizi2024vbr}, ETH3D~\citep{schops2017eth3d}, and 7Scenes~\citep{shotton20137scenes}.
All sequences are evaluated at full length without subsampling. vKITTI2, 7Scenes, and Waymo are included in our training data; Waymo evaluation uses segments not seen during training.
Detailed evaluation splits and per-dataset protocols are in Appendix~\ref{app:eval_data}.

\noindent \textbf{Baselines.}
We compare against three paradigms:
(i) \emph{optimization-based}: COLMAP~\citep{schonberger2016colmap}, DPVO~\citep{teed2023dpvo}/DPVO++, DROID-SLAM~\citep{teed2021droid}, MASt3R-SLAM~\citep{murai2025mast3rslam}, MASt3R-SfM~\citep{leroy2024mast3r}, VGGT-SLAM~\citep{maggio2025vggtslam};
(ii) \emph{offline feed-forward}: VGGT-Long~\citep{deng2026vggtlongchunkitloop}, FastVGGT~\citep{shen2025fastvggt}, LoGeR~\citep{zhang2026loger} (and its optimization variant LoGeR$^*$), Pi3-Chunk~\citep{pi32026};
(iii) \emph{online feed-forward}: CUT3R~\citep{wang2025cut3r}, TTT3R~\citep{chen2025ttt3r}, STream3R~\citep{lan2026stream3r}, StreamVGGT~\citep{zhuo2026streamvggt}, InfiniteVGGT~\citep{yuan2026infinitevggt}, LongStream~\citep{cheng2026longstream}, Lingbot-map~\citep{lingbotmap2026}.
For CUT3R, TTT3R, and LoGeR, we report refresh and no-refresh variants to isolate the effect of periodic state reset.
All baselines are evaluated on full sequences without subsampling using the released code and the default settings. We will release the evaluation scripts and code for reproducibility.

\subsection{Implementation Details}
\label{sec:impl}

\begin{wraptable}{r}{0.60\textwidth}
\vspace{-12pt}
\centering
%\caption{Quantitative comparison across datasets. We report ATE.}
\caption{\textbf{Quantitative comparison} across datasets.
VKITTI2, Waymo and ScanNet++ are in-domain training datasets.
HorizonStream performs strongly in both settings.}
\label{tab:multi_dataset}
\setlength{\tabcolsep}{4pt}
\renewcommand{\arraystretch}{1.12}
\resizebox{\linewidth}{!}{
\begin{tabular}{c l | c | cccccc|c}
\toprule
& \multirow{2}{*}{Method} & \multirow{2}{*}{Calib.-free}
& \multicolumn{6}{c}{ATE (m) $\downarrow$} & \multirow{2}{*}{FPS$\uparrow$}\\
\cmidrule(lr){4-9}
& & 
& VKITTI2 & KITTI & Oxford & ScanNet++ & TUM & Waymo & \\
\midrule

\multirow{6}{*}{\rotatebox[origin=c]{90}{Opt.-based}}
& MASt3R-SLAM & \cmark & 81.55 & 186.64 & 37.73 & 0.47 & 0.08 & 7.63 & 7.40 \\
& VGGT-SLAM & \cmark & 19.23 & 250.48 & 31.00 & 0.29 & 0.12 & 7.43 & 15.80 \\
& COLMAP & \cmark & 9.59 & 37.79 & 15.57 & GT & 0.19 & 25.63 & 0.20 \\
& MASt3R-SfM & \cmark & 49.48 & 140.60 & 32.13 & 1.50 & 0.39 & 3.95 & 0.30 \\
& DPVO++ & \xmark & 0.38 & 52.69 & 34.03 & 0.91 & 0.10 & 1.35 & 19.30 \\
& DROID-SLAM & \xmark & 1.12 & 50.71 & 31.08 & 0.97 & 0.11 & 6.67 & 13.60 \\

\midrule

\multirow{4}{*}{\rotatebox[origin=c]{90}{Offline Fwd.}}
& VGGT-Long & \cmark & 0.91 & 25.94 & 21.90 & 0.13 & 0.08 & 1.78 & 4.80 \\
& FastVGGT & \cmark & 21.52 & 189.29 & 36.58 & 1.56 & 0.42 & 1.28 & 14.20 \\
& LoGeR & \cmark & 1.66 & 21.91 & 18.70 & 0.50 & 0.07 & 0.96 & 16.0 \\
& LoGeR$^{*}$ & \cmark & 2.45 & 16.35 & 15.79 & 0.43 & 0.08 & 0.55 & 9.1  \\

\midrule

\multirow{9}{*}{\rotatebox[origin=c]{90}{Online Fwd.}}
& CUT3R & \cmark & 47.66 & 209.78 & 32.44 & 1.27 & 0.54 & 9.40 & 19.90 \\
& TTT3R & \cmark & 24.18 & 177.73 & 36.21 & 0.55 & 0.31 & 3.49 & 22.00 \\
& STream3R & \cmark & 68.96 & 227.77 & 37.57 & 1.75 & 0.63 & 42.20 & 8.20 \\
& StreamVGGT & \cmark & 68.51 & 226.15 & 37.25 & 1.70 & 0.63 & 45.10 & 19.10 \\
& InfiniteVGGT & \cmark & 58.63 & 183.85 & 31.82 & 1.66 & 0.21 & 20.56 & 5.30 \\
& LongStream & \cmark & 1.61 & 51.90 & 19.82 & \second{0.49} & \second{0.08} & \second{0.74} & 17.10 \\
& Lingbot-map & \cmark & \second{1.30} & 25.29 & 15.46 & 0.52 & \best{0.04} & 1.66 & 11.9 \\
\cmidrule{2-10}
& \textbf{Ours} & \cmark & \best{0.94} & \second{19.75} & \second{9.38} & \best{0.40} & \best{0.04} & \best{0.46} & 13.20 \\
& \textbf{Ours w/ LC} & \cmark & \best{0.94} & \best{16.44} & \best{8.71} & \best{0.40} & \best{0.04} & \best{0.46} & 10.45 \\
\bottomrule
\end{tabular}
}\vspace{-15pt}
\end{wraptable}
Training mirrors streaming inference: each sample consists of 48 frames, processed sequentially in 21-frame chunks, with the Geometric Linear Attention state propagating sequentially across chunks via a causal window. The pose prediction window is $W{=}10$, so short-term history spans 10 frames. Training proceeds in two stages: Stage~1 on 64 A800 GPUs for 60k iterations, Stage~2 on 64 H20 GPUs for 40k iterations with more long-sequence data. We use AdamW with learning rate $2{\times}10^{-5}$ and cosine schedule with 2000 warmup steps. Additional architecture specifications are in Appendix~\ref{app:arch}.

\noindent \textbf{Training data.}
We train on 24 datasets spanning indoor, outdoor driving, large-scale reconstruction, and synthetic environments, including ScanNet++~\citep{yeshwanth2023scannetpp}, Hypersim~\citep{roberts2021hypersim}, Replica~\citep{straub2019replica}, 7Scenes~\citep{shotton20137scenes}, ARKitScenes~\citep{baruch2021arkitscenes}, WildRGB-D~\citep{xia2024wildrgbd}, Waymo~\citep{sun2020waymo}, vKITTI2~\citep{cabon2020vkitti2}, Mapillary~\citep{warburg2020mapillary}, MegaDepth~\citep{li2018megadepth}, BlendedMVS~\citep{yao2020blendedmvs}, DL3DV~\citep{ling2024dl3dv}, CO3Dv2~\citep{reizenstein2021co3d}, TartanAir~\citep{wang2020tartanair}, PointOdyssey~\citep{zheng2023pointodyssey}, OmniWorld~\citep{zhou2025omniworldmultidomainmultimodaldataset}, MatrixCity~\citep{li2023matrixcity}, and internal long-sequence data, among others. Training clips use temporal strides from 1 to 8. For unordered image sets, we build pseudo-temporal sequences by traversing the camera graph. Frames are randomly permuted within each chunk with probability 0.2, while the cross-chunk order is preserved. Stage 1 focuses on short-window pose accuracy; Stage 2 adds longer clips for long-horizon inference. Full list and per-stage sampling ratios are in Appendix~\ref{app:training_data}.

\subsection{Camera Trajectory Estimation}
\label{sec:exp_pose}

\noindent \textbf{Long-short sequence generalization.}
Tab.~\ref{tab:kitti_ate},~\ref{tab:multi_dataset}, and~\ref{tab:vbr_ate} report mean ATE for trajectory estimation from indoor scenes to KITTI-scale driving and ultra-long VBR sequences exceeding \(10{,}000\) frames.  On indoor benchmarks, HorizonStream is evaluated on the full sequences without downsampling. It achieves the best overall performance among online methods and remains competitive with offline approaches. As sequence length grows, existing streaming methods show pose degradation, severe jitter, or collapse; Lingbot-map can achieve competitive ATE, but its pose becomes increasingly jittery over longer sequences, as shown in Fig.~\ref{fig:vis1}. HorizonStream remains stable across all sequence lengths.

% ================= 第二个表格 =================
\begin{wraptable}{r}{0.65\textwidth} % 同样 {r} 靠右，该表格稍宽，略微调大宽度比例
\vspace{-25pt}
\centering
\caption{\textbf{Quantitative comparison} on VBR.}
\vspace{-5pt}
% \caption{Quantitative comparison on VBR. We report ATE, where lower is better.}
\label{tab:vbr_ate}
\setlength{\tabcolsep}{3.5pt}
\renewcommand{\arraystretch}{1.12}
\resizebox{\linewidth}{!}{
\begin{tabular}{c l| ccccccc|c}
\toprule
& \multirow{2}{*}{Method}
& \multicolumn{7}{c}{VBR ATE $\downarrow$}
& \multirow{2}{*}{Avg.} \\
\cmidrule(lr){3-9}
&
& \makecell{colosseo\_0\\{\scriptsize\textcolor{cyan!70!black}{8815 fr.}}\\{\scriptsize\textcolor{red!70!black}{1.45 km}}}
& \makecell{campus\_0\\{\scriptsize\textcolor{cyan!70!black}{12042 fr.}}\\{\scriptsize\textcolor{red!70!black}{2.73 km}}}
& \makecell{campus\_1\\{\scriptsize\textcolor{cyan!70!black}{11671 fr.}}\\{\scriptsize\textcolor{red!70!black}{2.95 km}}}
& \makecell{pincio\_0\\{\scriptsize\textcolor{cyan!70!black}{11142 fr.}}\\{\scriptsize\textcolor{red!70!black}{1.27 km}}}
& \makecell{spagna\_0\\{\scriptsize\textcolor{cyan!70!black}{14141 fr.}}\\{\scriptsize\textcolor{red!70!black}{1.56 km}}}
& \makecell{diag\_0\\{\scriptsize\textcolor{cyan!70!black}{10021 fr.}}\\{\scriptsize\textcolor{red!70!black}{1.02 km}}}
& \makecell{ciampino\_1\\{\scriptsize\textcolor{cyan!70!black}{18846 fr.}}\\{\scriptsize\textcolor{red!70!black}{5.20 km}}}
& \\
\midrule

\multirow{6}{*}{\rotatebox[origin=c]{90}{Opt./Offline}}
& VGGT-SLAM & 101.00 & 93.51 & 71.74 & 66.42 & 57.00 & 33.64 & 124.10 & 78.20 \\
& VGGT-Long w/o LC & 81.54 & 118.59 & 98.21 & 53.44 & 46.92 & 30.80 & 170.30 & 85.69 \\
& VGGT-Long & 39.56 & 118.59 & 98.21 & 53.44 & 50.27 & 30.80 & 172.13 & 80.43 \\
& LoGeR & 31.77 & 27.90 & 30.80 & 17.96 & 21.33 & 32.25 & 34.16 & 28.02 \\
& LoGeR$^{*}$ & 55.32 & 13.27 & 16.79 & 9.18 & 18.32 & 29.45 & 34.32 & 25.24 \\
& Pi3-Chunk & 77.09 & 78.50 & 65.77 & 41.99 & 44.76 & 23.81 & 111.72 & 63.38 \\

\midrule

\multirow{6}{*}{\rotatebox[origin=c]{90}{Online Fwd.}}
& CUT3R & 82.63 & 42.25 & 43.16 & 46.65 & 44.62 & 28.62 & 175.83 & 66.25 \\
& TTT3R & 75.52 & 59.44 & 56.55 & 33.87 & 37.33 & \best{18.49} & 173.71 & 64.99 \\
& InfiniteVGGT & 83.91 & 123.65 & 100.00 & 70.73 & 56.25 & 31.58 & -- & 91.60 \\
& LongStream & 72.52 & 100.57 & 105.55 & 43.47 & 59.31 & 32.35 & 131.78 & 77.93 \\
& Lingbot-map & \second{16.70} & \second{23.61} & \second{10.37} & 29.37 & 24.29 & 24.12 & 64.24 & 27.53 \\
\cmidrule{2-10}
& Ours & 37.42 & \best{22.46} & 22.49 & \second{22.63} & \second{23.52} & \second{22.46} & \second{26.10} & \second{25.30} \\
& Ours w/ LC & \best{12.76} & 28.54 & \best{8.49} & \best{17.24} & \best{23.06} & 24.05 & \best{17.76} & \best{18.84} \\

\bottomrule
\end{tabular}
}
\vspace{-20pt}
\end{wraptable}

\noindent \textbf{KV-cache contamination.}
Refresh/no-refresh variants of CUT3R, TTT3R, and LoGeR isolate periodic state reset. Without refresh, all three degrade sharply, indicating temporal-state contamination rather than limited model capacity. HorizonStream avoids periodic refresh by discounting stale evidence and maintaining a bounded geometric state throughout the sequence.

\begin{figure}[t]
\centering
\includegraphics[width=\textwidth]{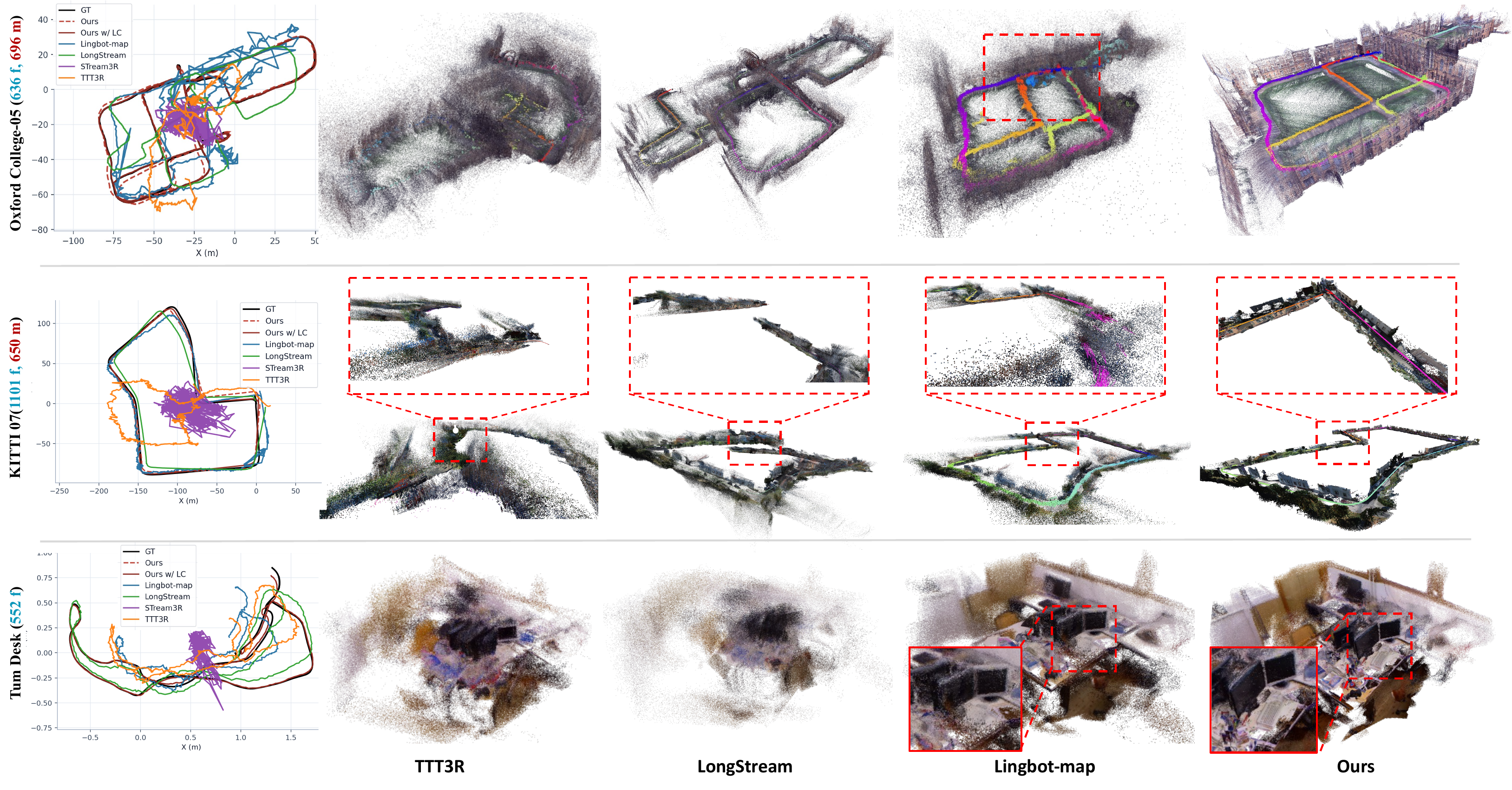}
\vspace{-20pt}
\caption{\textbf{Qualitative comparison} on 3D reconstruction. Left: trajectory. Right: 3D reconstruction. HorizonStream maintains stable geometry. Lingbot-map preserves trajectory direction but exhibits increasing jitter, causing point cloud overlap.}
\label{vis2}
\vspace{-20pt}
\end{figure}

\begin{table}
\centering
% 第一个表格（多视图重建基准测试），占据 65% 的页面宽度
\begin{minipage}[t]{0.65\textwidth}
    \centering
    \caption{\textbf{Quantitative comparison} of CD ($\downarrow$) and F1 ($\uparrow$) on multi-view reconstruction benchmarks.}
    \label{tab:recon}
    \setlength{\tabcolsep}{3.5pt}
    \renewcommand{\arraystretch}{1.12}
    
    % 使用 \linewidth 确保表格填满当前的 minipage
    \resizebox{0.9\linewidth}{!}{
    \begin{tabular}{c l | cc cc cc cc}
    \toprule
    & \multirow{2}{*}{Method}
    & \multicolumn{2}{c}{ETH3D}
    & \multicolumn{2}{c}{Oxford Spires}
    & \multicolumn{2}{c}{7Scenes}
    & \multicolumn{2}{c}{TUM} \\

    \cmidrule(lr){3-4}
    \cmidrule(lr){5-6}
    \cmidrule(lr){7-8}
    \cmidrule(lr){9-10}

    &
    & CD $\downarrow$ & F1@0.25 $\uparrow$
    & CD $\downarrow$ & F1@4 $\uparrow$
    & CD $\downarrow$ & F1@0.25 $\uparrow$
    & CD $\downarrow$ & F1@0.25 $\uparrow$ \\

    \midrule
    \multirow{5}{*}{\rotatebox[origin=c]{90}{Offline/Opt.}}
    & VGGT-Long     & 0.24 & 0.84 & 6.37 & 0.72 & 6.31 & 0.70 & 0.87 & 0.75 \\
    & MASt3R-SLAM   & 0.89 & 0.31 & 14.59 & 0.35 & 6.32 & 0.71 & 0.10 & 0.92 \\
    & VGGT-SLAM     & 0.78 & 0.72 & 11.51 & 0.32 & 6.37 & 0.71 & 0.10 & 0.93 \\
    & FastVGGT      & 0.50 & 0.70 & 7.97 & 0.63 & 5.99 & 0.69 & 0.07 & 0.94 \\
    & LoGeR         & 0.09 & 0.90 & 1.92 & 0.85 & 6.81 & 0.71 & 0.06 & 0.96 \\

    \midrule
    \multirow{8}{*}{\rotatebox[origin=c]{90}{Online}}
    & StreamVGGT    & 1.86 & 0.14 & 15.45 & 0.27 & 6.23 & 0.66 & 0.39 & 0.59 \\
    & STream3R      & 1.81 & 0.14 & 15.44 & 0.26 & 6.31 & \second{0.72} & \second{0.15} & 0.86 \\
    & CUT3R         & 0.41 & 0.60 & 8.22 & 0.41 & 6.35 & 0.48 & 1.51 & 0.32 \\
    & TTT3R         & 0.43 & 0.59 & 9.95 & 0.30 & 6.63 & 0.48 & 0.86 & 0.29 \\
    & InfiniteVGGT  & 0.46 & 0.61 & 9.65 & 0.43 & 6.43 & 0.69 & 0.22 & 0.81 \\
    & LongStream    & 0.77 & 0.55 & \second{6.28} & \second{0.55} & \best{2.26} & 0.64 & 0.23 & 0.67 \\
    & Lingbot-map   & \second{0.37} & \second{0.68} & 8.69 & 0.43 & 6.33 & \second{0.72} & \best{0.08} & \second{0.94} \\
    \cmidrule{2-10}
    & Ours          & \best{0.32} & \best{0.74} & \best{4.97} & \best{0.89} & \second{2.98} & \best{0.93} & \best{0.08} & \best{0.95} \\
    \bottomrule
    \end{tabular}
    }
\end{minipage}\hfill % \hfill 用于在两个表格之间自动填充空白
% 第二个表格（KITTI 深度估计），占据 32% 的页面宽度
\begin{minipage}[t]{0.315\textwidth}
    \centering
    \caption{Video depth estimation results on KITTI.}
    \label{tab:kitti_depth}
    \setlength{\tabcolsep}{4pt}
    \renewcommand{\arraystretch}{1.05}
    
    % 同样使用 \linewidth
    \resizebox{0.9\linewidth}{!}{
    \begin{tabular}{lcc}
    \toprule
    Method & Abs Rel $\downarrow$ & $\delta<1.25$ $\uparrow$ \\
    \midrule
    DUSt3R-GA      & 0.144 & 81.3 \\
    MASt3R-GA      & 0.183 & 74.5 \\
    MonST3R-GA     & 0.168 & 74.4 \\
    VGGT           & \second{0.061} & \best{97.0} \\
    Spann3R        & 0.198 & 73.7 \\
    CUT3R          & 0.118 & 88.1 \\
    Point3R        & 0.136 & 84.2 \\
    \midrule
    StreamVGGT     & 0.173 & 72.1 \\
    STream3R       & 0.080 & 94.7 \\
    InfiniteVGGT   & 0.170 & 78.6 \\
    LoGeR          & 0.090 & 93.0 \\
    LongStream     & 0.120 & 87.0 \\
    Lingbot-map    & 0.098 & 90.7 \\
    \midrule
    Ours           & \best{0.057} & \second{94.8} \\
    \bottomrule
    \end{tabular}
    }
\end{minipage}
\vspace{-15pt}
\end{table}

\subsection{Dense Reconstruction and Depth}
\label{sec:exp_recon}

Tab.~\ref{tab:recon} and Tab.~\ref{tab:kitti_depth} report reconstruction and depth accuracy. Note that 7Scenes is part of our training data. HorizonStream achieves the best online reconstruction quality across four benchmarks, mainly due to more accurate pose estimation. On 7Scenes, several baselines have inflated mean CD due to large errors on Chess, Pumpkin, and RedKitchen. On KITTI depth, HorizonStream approaches the best offline methods among the compared baselines.

\begin{wraptable}{r}{0.4\columnwidth}
\vspace{-15pt}
\centering
\caption{ATE ($\downarrow$) ablation on vKITTI2.}
\label{tab:ablation}
\setlength{\tabcolsep}{4pt}
\renewcommand{\arraystretch}{1.08}
\small
\resizebox{\linewidth}{!}{
\begin{tabular}{lccc}
\toprule
Variant & 80f & 200f & 1000f \\
\midrule
\textbf{Full model} & \textbf{0.42} & \textbf{0.71} & \textbf{1.20} \\
\midrule
\multicolumn{4}{l}{\textit{Geometric Linear Attention}} \\
\quad w/o Geometric Linear Attention & 0.83 & 2.06 & 5.38 \\
\quad w/o channel-wise gating & 0.67 & 1.43  & 3.21 \\
\quad replace with TTT-like fast weight & 0.58 & 1.56 & 3.96 \\
\midrule
\multicolumn{4}{l}{\textit{Geometric Local Attention}} \\
\quad w/o Geometric Local Attention & 0.78 & 2.64 & 7.46 \\
\quad w/o head-wise output gating & 0.61 & 1.74 & 4.06 \\
\quad w/o Geometric RoPE, 2D spatial only & 0.64 & 1.22 & 2.58 \\
\midrule
\multicolumn{4}{l}{\textit{Scale and pose}} \\
\quad w/o MRT & 0.55 & 1.32 & 3.34 \\
\quad single-token pose, no aggregation & 0.51 & 1.10 & 2.67 \\
\bottomrule
\end{tabular}
}
\vspace{-8pt}
\end{wraptable}

\noindent \textbf{Ablation study.}
\textit{Geometric Linear Attention.}
Removing it entirely causes severe drift, confirming the necessity of long-sterm state.
Disabling channel-wise gating or replacing it with TTT-like fast weights both degrade performance, especially at longer horizons, showing that per-channel bounded retention is critical.
Fig.~\ref{fig:gamma_1} visualizes the learned effective lifetimes $\tau = -1/\log\bar{\gamma}$, which form a continuous spectrum across channels and layers.
Fig.~\ref{fig:ab_band} further shows that replacing this learned spectrum with any fixed band degrades accuracy, confirming the necessity of multi-timescale retention.

\textit{Geometric Local Attention.}
Removing it yields the severe degradation, reflecting the importance of fine-grained spatial matching within each window.
%todo add some discussion about Geometric Linear Attention help scale stabel
Fig.~\ref{ab:ate} shows that head-wise output gating and Spatiotemporal RoPE are complementary: removing either substantially increases drift over long sequences.

\textit{Scale and pose readout.}
Metric Readout Tokens and multi-token pose aggregation each contribute consistent gains.
Additional results on loop closure, memory and runtime scaling, and training convergence are in Appendix~\ref{app:results}.

\begin{figure}[t!]
  \centering
  
  % 第一张图 (a)
  \begin{subfigure}[b]{0.32\textwidth}
    \centering
    \includegraphics[width=\linewidth]{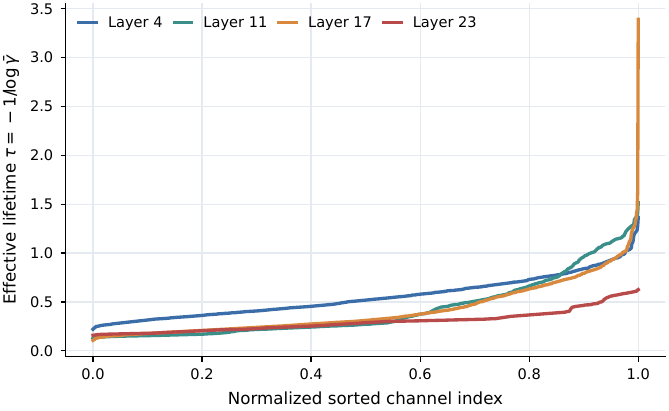}
    \caption{}
    \label{fig:gamma_1}
  \end{subfigure}%
  \hfill
  % 第二张图 (b)
  \begin{subfigure}[b]{0.32\textwidth}
    \centering
    \includegraphics[width=\linewidth]{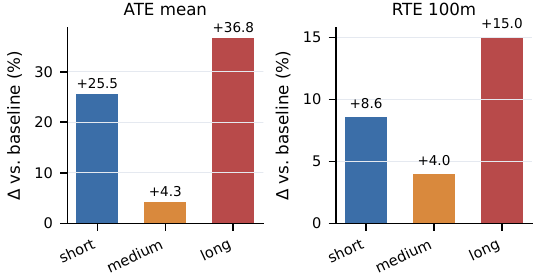}
    \caption{}
    \label{fig:ab_band}
  \end{subfigure}%
  \hfill
  % 第三张图 (c)
  \begin{subfigure}[b]{0.32\textwidth}
    \centering
    \includegraphics[width=\linewidth]{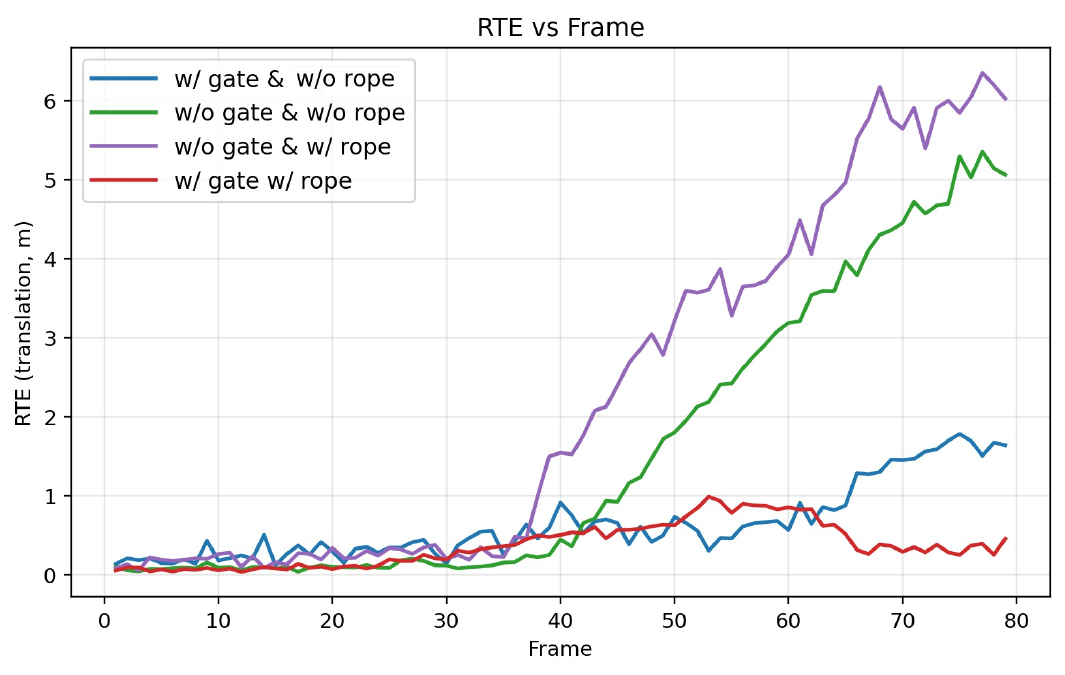}
    \caption{}
    \label{ab:ate}
  \end{subfigure}
  \vspace{-7pt}
  \caption{
\textbf{(a) Learned retention spectra in Geometric Linear Attention.}
Effective lifetimes \(\tau=-1/\log\bar{\gamma}\) vary across channels and layers. Layer~4 exhibits broad mid-range retention, while Layer~17 develops a sharper long-retention tail, supporting channel-wise multi-timescale propagation.
\textbf{(b) Retention-band ablation.}
Replacing the learned channel-wise retention spectrum with fixed short-, medium-, or long-horizon bands increases trajectory error, showing that stable long-sequence propagation requires learned multi-timescale retention.
\textbf{(c) Long-sequence stability of Geometric Local Attention.}
Head-wise gating and 3D RoPE are complementary: removing either causes error growth over time, while using both keeps the model stable.
  }
  \label{fig:main_combined_figure}
  \vspace{-15pt}
\end{figure}

% ====================================================================
% 5. DISCUSSION
% ====================================================================
\noindent \textbf{Discussion.}
\label{sec:discussion}
Horizon-Stream predicts poses using a local window of only 10 frames, suggesting that compact local geometric evidence is sufficient for accurate pose estimation while reducing memory cost and improving inference speed. A larger pose window may further improve the model's internal loop-closure ability. Additionally, for extremely long sequences with repeated revisits, the fixed-size recurrent state still miss fine-grained details, as shown in Appendix~\ref{ab:failure}. Dynamic foreground objects can also corrupt local geometric evidence in the input video. The optional loop-closure module is currently parameterized separately, and its optimization settings could be further refined.
% ====================================================================
% 6. CONCLUSION
% ====================================================================
\section{Conclusion}
\label{sec:conclusion}

We presented HorizonStream, a streaming 3D reconstruction framework built on an evidence influence kernel that unifies long-term temporal memory and short-term spatial matching. Trained on 48 frames, it generalizes to sequences exceeding 10{,}000 frames with constant memory and linear time.

\bibliographystyle{plainnat}
\bibliography{bibliography}

\clearpage
% ====================================================================
% APPENDIX
% ====================================================================
% ====================================================================
% APPENDIX
% ====================================================================
\appendix

\section{Geometric Attention Dilution}
\label{app:softmax_dilution}

We formalize why causal softmax attention cannot serve as long-range cross-window memory in streaming 3D reconstruction.

Let $\Omega_t$ denote the set of 3D points visible at time $t$, and define the \emph{geometric relevance} of frame $i$ to frame $t$ as the co-visibility ratio $r(i,t) = |\Omega_i \cap \Omega_t|/|\Omega_t|$.
For a camera exploring new regions, the co-visibility set $\mathcal{R}_t = \{i \leq t : r(i,t) > 0\}$ is bounded by $|\mathcal{R}_t| \leq W_{\text{geo}}$, determined by scene geometry and camera speed.

\begin{proposition}[Geometric Attention Dilution]
\label{prop:dilution}
Let $\alpha_i = \mathrm{softmax}(\bq_t^\top \bk_i / \sqrt{d})_{i=1}^t$ be causal softmax attention weights with bounded scores $|\bq_t^\top \bk_i / \sqrt{d}| \leq M$.
The total attention on geometrically relevant frames satisfies:
\begin{equation}
\label{eq:dilution}
\sum_{i \in \mathcal{R}_t} \alpha_i \leq \frac{1}{1 + \dfrac{t - W_{\mathrm{geo}}}{W_{\mathrm{geo}}} \cdot e^{-2M}}.
\end{equation}
For $t > W_{\mathrm{geo}}(1 + e^{2M})$, more than half the attention mass falls on geometrically irrelevant frames.
Even under perfect score discrimination, the relevant fraction decays as $O(W_{\mathrm{geo}} e^{2M} / t)$.
\end{proposition}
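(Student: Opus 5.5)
The bound follows from an elementary comparison of the largest and smallest possible exponentiated scores. Write $s_i = \bq_t^\top \bk_i/\sqrt{d}\in[-M,M]$ and split the causal index set $\{1,\dots,t\}$ into the relevant set $\mathcal{R}_t$ and its complement $\mathcal{R}_t^c$. Then
\[
\sum_{i\in\mathcal{R}_t}\alpha_i=\frac{A}{A+B},\qquad A=\sum_{i\in\mathcal{R}_t}e^{s_i},\quad B=\sum_{i\in\mathcal{R}_t^c}e^{s_i}.
\]
The map $x\mapsto x/(x+B)$ is increasing in $x$, and $A/(A+B)$ is decreasing in $B$. So I will upper bound $A$ and lower bound $B$ separately, which gives $\sum_{\mathcal{R}_t}\alpha_i\le 1/(1+B_{\min}/A_{\max})$.

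For the two bounds:
\begin{itemize}
\item Each relevant term satisfies $e^{s_i}\le e^{M}$, and $|\mathcal{R}_t|\le W_{\mathrm{geo}}$, so $A\le W_{\mathrm{geo}}e^{M}$.
\item Each irrelevant term satisfies $e^{s_i}\ge e^{-M}$, and $|\mathcal{R}_t^c|=t-|\mathcal{R}_t|\ge t-W_{\mathrm{geo}}$, so $B\ge (t-W_{\mathrm{geo}})e^{-M}$.
\end{itemize}
Hence $B/A\ge \frac{t-W_{\mathrm{geo}}}{W_{\mathrm{geo}}}e^{-2M}$, which is exactly \eqref{eq:dilution}.

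The one point needing care is that $A$ and $|\mathcal{R}_t^c|$ are coupled through $|\mathcal{R}_t|$. If $|\mathcal{R}_t|=m$, the ratio is at most $1/\bigl(1+\frac{t-m}{m}e^{-2M}\bigr)$. This expression is increasing in $m$, so it is largest at $m=W_{\mathrm{geo}}$, which justifies using the worst case $m=W_{\mathrm{geo}}$. I also assume $t\ge W_{\mathrm{geo}}$; otherwise the right-hand side is at least $1$ and the bound is trivial.

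For the corollaries:
\begin{itemize}
\item The right-hand side is below $1/2$ exactly when $\frac{t-W_{\mathrm{geo}}}{W_{\mathrm{geo}}}e^{-2M}>1$, i.e.\ when $t>W_{\mathrm{geo}}(1+e^{2M})$. At that point the irrelevant frames carry more than half the mass.
\item For the rate, use $1/(1+x)\le 1/x$ to get $\sum_{\mathcal{R}_t}\alpha_i\le \frac{W_{\mathrm{geo}}e^{2M}}{t-W_{\mathrm{geo}}}$, which is $O(W_{\mathrm{geo}}e^{2M}/t)$ as $t\to\infty$.
\end{itemize}
"Perfect score discrimination" means the extreme configuration where relevant scores equal $M$ and irrelevant scores equal $-M$. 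Both inequalities are tight there, so the bound holds with equality in that case and the rate cannot be improved under the bounded-score hypothesis.

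There is no real obstacle in the argument. The only delicate step is the monotonicity check in $m$ described above, which makes substituting $W_{\mathrm{geo}}$ into both the numerator count and the complement count legitimate.
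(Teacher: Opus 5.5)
Your proof is correct and follows the paper's argument. The paper plugs in the extremal assignment ($+M$ on the $W_{\mathrm{geo}}$ relevant frames, $-M$ elsewhere) and simply asserts that any other assignment only lowers the relevant mass; your monotonicity of $A/(A+B)$ in $A$ and $B$ makes that step precise, and you also cover $|\mathcal{R}_t|<W_{\mathrm{geo}}$, which the paper's write-up skips. Your monotonicity check in $m$ is harmless but not needed, since $A\le W_{\mathrm{geo}}e^{M}$ and $B\ge (t-W_{\mathrm{geo}})e^{-M}$ already hold separately.
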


\begin{proof}
Assign the best-case scores: $+M$ to all $W_{\text{geo}}$ relevant frames, $-M$ to all others.
Then:
\[
\sum_{i \in \mathcal{R}_t} \alpha_i \leq \frac{W_{\text{geo}} \cdot e^M}{W_{\text{geo}} \cdot e^M + (t - W_{\text{geo}}) \cdot e^{-M}}.
\]
Dividing numerator and denominator by $W_{\text{geo}} \cdot e^M$ yields~\eqref{eq:dilution}.
Any suboptimal score assignment only worsens the bound.
For $t \gg W_{\text{geo}} e^{2M}$, the bound is $O(W_{\text{geo}} e^{2M} / t) \to 0$.
\end{proof}

\begin{remark}
In practice, models mitigate the wasted attention by concentrating irrelevant mass onto \emph{sink tokens}~\citep{xiao2024streamingllm,gu2025sinkemerge} whose values collapse toward zero.
This symptom does not resolve the underlying problem: the geometrically useful signal fraction still vanishes as $O(1/t)$, and the $O(t)$ KV-cache cost remains.
Both properties rule out causal softmax as a cross-window memory mechanism.
\end{remark}

\section{Extended Theoretical Analysis}
\label{app:theory}

\subsection{Zero-Forgetting Contamination and Stability}

We state and prove the two core propositions motivating selective forgetting in the recurrent geometric state.

\begin{proposition}[Zero-Forgetting Contamination]
\label{prop:persistence}
Under zero forgetting ($\gamma \equiv 1$), the initial state $\bS_0$ contributes to every output with undiminished magnitude:
\[
\bo_t = \bq_t^\top \bS_0 + \sum_{i=1}^t \bq_t^\top \bk_i \tilde{\bv}_i^\top.
\]
No amount of new evidence can dilute the initial state.
\end{proposition}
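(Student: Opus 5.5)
The plan is to unroll the recurrence in Eq.~\eqref{eq:linear_update} with $\gamma_t \equiv 1$ and then read off the output formula. With zero forgetting the update becomes $\bS_t = \bS_{t-1} + \phi(\bk_t)\tilde{\bv}_t^\top$. I would prove by induction on $t$ that
\[
\bS_t = \bS_0 + \sum_{i=1}^t \phi(\bk_i)\tilde{\bv}_i^\top .
\]
The base case $t=0$ is trivial. The inductive step substitutes the hypothesis for $\bS_{t-1}$ into the update. Equivalently, this is the general solution $\bS_t = (\prod_{j=1}^t\gamma_j)\bS_0 + \sum_{i=1}^t K_{\mathrm{time}}(t,i)\,\phi(\bk_i)\tilde{\bv}_i^\top$, specialised to $K_{\mathrm{time}}\equiv 1$ from Eq.~\eqref{eq:objective}. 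Left-multiplying by $\bq_t^\top$ and using linearity gives $\bo_t = \bq_t^\top\bS_0 + \sum_{i=1}^t \bq_t^\top\phi(\bk_i)\tilde{\bv}_i^\top$. This matches the displayed formula under the statement's convention of writing $\bk_i$ for the feature-mapped key $\phi(\bk_i)$. I will state that identification explicitly: the proposition suppresses $\phi$, and the identity map is one admissible choice.

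For the contamination claim, I would contrast with Eq.~\eqref{eq:stability}. There the $\bS_0$ term carries the coefficient $\prod_{j}\gamma_j \le \bar\gamma^{\,t}\to 0$. Here the coefficient is exactly $1$ for every $t$, so the contribution $\bq_t^\top\bS_0$ depends on $t$ only through the query and keeps its full magnitude. To make ``no amount of new evidence can dilute'' precise, I will note that the evidence terms enter additively, not through any normalisation. Whatever keys and values arrive, the decomposition of $\bo_t$ always contains $\bq_t^\top\bS_0$ with unit weight, and this term is independent of $\{\bk_i,\tilde{\bv}_i\}_{i\le t}$.

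The difficulty is interpretive rather than technical. ``Dilute'' has to mean that the relative coefficient of $\bS_0$ never shrinks, not that $\|\bq_t^\top\bS_0\|$ stays large relative to the growing sum, which could dominate it. I would phrase the conclusion as: the influence kernel on $\bS_0$ is identically $1$, so removing the contamination would require cancellation by the new terms, which the update never enforces.
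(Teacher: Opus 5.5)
Your proposal is correct and matches the paper's proof. You unroll the ungated update $\bS_t=\bS_{t-1}+\bk_t\tilde{\bv}_t^\top$ to $\bS_t=\bS_0+\sum_{i=1}^t\bk_i\tilde{\bv}_i^\top$, left-multiply by $\bq_t^\top$, and read contamination off the unit coefficient on $\bS_0$ (the paper likewise suppresses $\phi$); your remarks that this term depends on $t$ only through $\bq_t$, and that ``dilution'' concerns the coefficient rather than relative magnitude, are more careful than the paper's ``independent of $t$''.
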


\begin{proof}
Under the ungated update $\bS_t = \bS_{t-1} + \bk_t\tilde{\bv}_t^\top$, unrolling gives $\bS_t = \bS_0 + \sum_{i=1}^t \bk_i\tilde{\bv}_i^\top$.
Hence $\bo_t = \bq_t^\top\bS_t = \bq_t^\top\bS_0 + \sum_{i=1}^t \bq_t^\top\bk_i\tilde{\bv}_i^\top$.
The $\bq_t^\top\bS_0$ term is independent of $t$ and never diminishes.
\end{proof}

\begin{remark}
This is the root cause of the degradation observed in TTT without reset~\citep{chen2025ttt3r,zhang2026loger}, CUT3R~\citep{wang2025cut3r}, and standard linear attention when applied to long streaming sequences: the state is permanently anchored to initialization regardless of camera motion.
\end{remark}

\begin{proposition}[Bounded Initial-State Influence]
\label{prop:bounded_initial_state}
Under the channel-wise retention update~\eqref{eq:gate_update}, if 
\(\bar{\gamma} = \sup_{t,c} |\gamma_t^{(c)}| < 1\), 
the contribution of the initial state decays exponentially:
\[
\left\|
\bq_t^\top 
\left(\prod_{j=1}^{t} \operatorname{diag}(\bgamma_j)\right) 
\bS_0
\right\|
\leq 
\|\bq_t\| \cdot \|\bS_0\|_F \cdot \bar{\gamma}^{\,t}
\to 0
\quad \text{as } t \to \infty .
\]
\end{proposition}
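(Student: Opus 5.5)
The plan is to unroll the channel-wise recursion~\eqref{eq:gate_update} to isolate the contribution of \(\bS_0\), and then bound that term with elementary norm inequalities. Induction on \(t\) gives
\[
\bS_t=\Bigl(\prod_{j=1}^{t}\mathrm{diag}(\bgamma_j)\Bigr)\bS_0+\sum_{i=1}^{t}\Bigl(\prod_{j=i+1}^{t}\mathrm{diag}(\bgamma_j)\Bigr)\phi(\bk_i)\tilde{\bv}_i^\top .
\]
This is the gated analogue of the unrolling used in Proposition~\ref{prop:persistence}. The first term is exactly the initial-state contribution that appears in the statement.

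Next, observe that diagonal matrices commute. The product \(\mathbf{D}_t:=\prod_{j=1}^{t}\mathrm{diag}(\bgamma_j)\) is therefore itself diagonal, with \(c\)-th entry \(\prod_{j=1}^{t}\gamma_j^{(c)}\). By the definition of \(\bar{\gamma}\), each entry has absolute value at most \(\bar{\gamma}^{\,t}\). Hence the spectral norm satisfies \(\|\mathbf{D}_t\|_2=\max_c\prod_{j}|\gamma_j^{(c)}|\le\bar{\gamma}^{\,t}\).

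The bound then follows by submultiplicativity:
\[
\|\bq_t^\top\mathbf{D}_t\bS_0\|\le\|\bq_t\|\,\|\mathbf{D}_t\|_2\,\|\bS_0\|_2\le\|\bq_t\|\,\|\bS_0\|_F\,\bar{\gamma}^{\,t}.
\]
Here we use \(\|\bS_0\|_2\le\|\bS_0\|_F\). An alternative is to write \(\bq_t^\top\mathbf{D}_t\) as the vector with entries \(q_t^{(c)}\prod_j\gamma_j^{(c)}\), whose Euclidean norm is at most \(\bar{\gamma}^{\,t}\|\bq_t\|\), and then apply Cauchy--Schwarz row by row. Since \(0\le\bar{\gamma}<1\), we have \(\bar{\gamma}^{\,t}\to0\).

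The only genuine subtlety is the limit statement. The bound tends to zero only if \(\|\bq_t\|\) does not grow faster than \(\bar{\gamma}^{-t}\). I would handle this by making explicit the implicit assumption that queries are uniformly bounded, \(\sup_t\|\bq_t\|<\infty\), which holds for normalized or layer-normed inputs. Alternatively, I would state convergence for the normalized quantity \(\|\bq_t^\top\mathbf{D}_t\bS_0\|/\|\bq_t\|\).

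Two further technicalities need a line each. First, the sigmoid parametrization gives \(\gamma_t^{(c)}\in(0,1)\) pointwise, but this alone does not give \(\sup<1\). The hypothesis \(\bar{\gamma}<1\) must therefore be taken as assumed, for example justified by bounded inputs \(\bx_t\). Second, the norm on the left is the Euclidean norm of a row vector.
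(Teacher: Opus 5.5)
Your proposal is correct and follows the paper's route: unroll the gated recurrence to isolate \(\bigl(\prod_{j=1}^{t}\mathrm{diag}(\bgamma_j)\bigr)\bS_0\), bound the operator norm of the diagonal product by \(\bar{\gamma}^{\,t}\), and finish with Cauchy--Schwarz/submultiplicativity together with \(\|\bS_0\|_2\le\|\bS_0\|_F\). Your remark that the limit \(\to 0\) also needs \(\sup_t\|\bq_t\|<\infty\) (or should be read for the normalized quantity) is a legitimate caveat that the paper's proof leaves implicit.
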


\begin{proof}
Unrolling the gated recurrence gives
\[
\bS_t =
\left(\prod_{j=1}^{t} \operatorname{diag}(\bgamma_j)\right)\bS_0
+
\sum_{i=1}^{t}
\left(\prod_{j=i+1}^{t} \operatorname{diag}(\bgamma_j)\right)
\bk_i \tilde{\bv}_i^\top .
\]
Since 
\(\|\operatorname{diag}(\bgamma_j)\|_{\mathrm{op}} \leq \bar{\gamma}\),
submultiplicativity gives
\[
\left\|
\prod_{j=1}^{t} \operatorname{diag}(\bgamma_j)
\right\|_{\mathrm{op}}
\leq
\bar{\gamma}^{\,t}.
\]
Applying Cauchy--Schwarz yields the stated bound.
\end{proof}

\noindent
Thus, channel-wise retention with \(\bar{\gamma}<1\) is a sufficient condition for closing the influence of the initial state. The recurrent state remains adaptive to incoming evidence rather than being anchored to its initialization.
% ====================================================================
\subsection{Effective Memory Horizon}

\begin{proposition}[Per-Channel Memory Horizon]
\label{prop:horizon}
For a channel $c$ with constant gate $\gamma^{(c)} \in (0,1)$, define the effective memory horizon as $\tau^{(c)} = -1/\log \gamma^{(c)}$.
Then the contribution of observation $i$ to the output at time $t$ decays as:
\[
w^{(c)}(t, i) = (\gamma^{(c)})^{t-i} = e^{-(t-i)/\tau^{(c)}}.
\]
For $t - i > 3\tau^{(c)}$, the contribution is below $5\%$ of its original weight.
\end{proposition}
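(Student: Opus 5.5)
The plan is to specialize the unrolled recurrence from the proof of Proposition~\ref{prop:bounded_initial_state} to a single channel with a time-constant gate, then rewrite the resulting geometric weight in exponential form.

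\textbf{Step 1: isolate channel $c$.} Since $\mathrm{diag}(\bgamma_j)$ is diagonal, row $c$ of $\bS_t$ evolves independently of the other rows. Writing $\bS_t^{(c)}$ for that row, the update~\eqref{eq:gate_update} becomes
\[
\bS_t^{(c)}=\gamma^{(c)}\bS_{t-1}^{(c)}+\phi(\bk_t)^{(c)}\tilde{\bv}_t^\top .
\]

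\textbf{Step 2: unroll.} Unrolling exactly as in the proof of Proposition~\ref{prop:bounded_initial_state} gives
\[
\bS_t^{(c)}=(\gamma^{(c)})^t\bS_0^{(c)}+\sum_{i=1}^{t}(\gamma^{(c)})^{t-i}\phi(\bk_i)^{(c)}\tilde{\bv}_i^\top .
\]
The output $\bo_t=\bq_t^\top\bS_t$ is linear in the rows of $\bS_t$. Hence the term written at time $i$ enters the output at time $t$ scaled by the factor $w^{(c)}(t,i)=(\gamma^{(c)})^{t-i}$. This agrees with the constant-gate case of $K^{(c)}_{\mathrm{time}}$ in~\eqref{eq:channel_kernel_horizon}: the product $\prod_{j=i+1}^t\gamma^{(c)}$ has exactly $t-i$ factors. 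Relative to its weight at insertion, where $w^{(c)}(i,i)=1$, this factor is therefore the decay of the contribution.

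\textbf{Step 3: exponential form.} Because $\gamma^{(c)}\in(0,1)$, we have $\log\gamma^{(c)}<0$, so $\tau^{(c)}=-1/\log\gamma^{(c)}>0$ is well defined. Then
\[
(\gamma^{(c)})^{t-i}=e^{(t-i)\log\gamma^{(c)}}=e^{-(t-i)/\tau^{(c)}} .
\]

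\textbf{Step 4: threshold.} The map $s\mapsto e^{-s/\tau^{(c)}}$ is decreasing, so $t-i>3\tau^{(c)}$ gives
\[
w^{(c)}(t,i)<e^{-3}\approx 0.0498<0.05 .
\]

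There is no genuine obstacle here. The only point that needs care is the interpretation in Step 2: one must use the diagonal structure to justify a per-channel weight, and state that ``contribution'' means this scalar multiplier. The actual magnitude also depends on $\bq_t$ and the key feature $\phi(\bk_i)^{(c)}$, and the statement is about the multiplier alone. The numerical fact $e^{-3}<0.05$ is elementary.
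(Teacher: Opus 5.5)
Your proof is correct and follows the same route as the paper. The paper's proof is just the direct computation $(\gamma^{(c)})^{t-i}=e^{(t-i)\log\gamma^{(c)}}=e^{-(t-i)/\tau^{(c)}}$ followed by evaluation at $t-i=3\tau^{(c)}$. Your extra steps make explicit what the paper leaves implicit: you derive the per-channel weight from the diagonal recurrence, and you use monotonicity with $e^{-3}\approx 0.0498<0.05$, which gives the strict ``below $5\%$'' claim that the paper's ``$\approx 0.050$'' glosses over.
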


\begin{proof}
Direct computation: $w^{(c)}(t,i) = (\gamma^{(c)})^{t-i} = e^{(t-i)\log\gamma^{(c)}} = e^{-(t-i)/\tau^{(c)}}$.
At $t - i = 3\tau^{(c)}$: $w = e^{-3} \approx 0.050$.
\end{proof}

\begin{corollary}[Heterogeneous Memory Partitioning]
\label{cor:partition}
With channel-wise gating, the state $\bS_t \in \R^{d \times d}$ is implicitly partitioned into subspaces with different memory lifetimes.
Let $\mathcal{C}_{fast} = \{c : \gamma^{(c)} < \gamma_{th}\}$ and $\mathcal{C}_{slow} = \{c : \gamma^{(c)} \geq \gamma_{th}\}$ for some threshold $\gamma_{th}$.
Then the fast subspace $\bS_t[\mathcal{C}_{fast}, :]$ acts as a short-term feature buffer with horizon $\tau_{fast} \ll T$, while the slow subspace $\bS_t[\mathcal{C}_{slow}, :]$ acts as a long-term geometric memory with horizon $\tau_{slow} \gg W$.
This partitioning is learned end-to-end and adapts to the geometric content of the training data.
\end{corollary}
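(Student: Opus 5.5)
The corollary is essentially a repackaging of Proposition~\ref{prop:horizon} applied row-by-row to the unrolled state. I will first make precise what ``acts as a buffer with horizon $\tau$'' means, then verify it for each subspace.

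\textbf{Step 1: unroll by rows.} Since $\mathrm{diag}(\bgamma_t)$ acts row-wise on $\bS_t$, the recurrence~\eqref{eq:gate_update} decouples across rows. As in the proof of Proposition~\ref{prop:bounded_initial_state}, row $c$ of the state satisfies
\[
\bS_t[c,:]=\Bigl(\prod_{j=1}^{t}\gamma_j^{(c)}\Bigr)\bS_0[c,:]+\sum_{i=1}^{t}\Bigl(\prod_{j=i+1}^{t}\gamma_j^{(c)}\Bigr)\phi(\bk_i)^{(c)}\tilde{\bv}_i^\top .
\]
So the row-subspaces indexed by $\mathcal{C}_{fast}$ and $\mathcal{C}_{slow}$ evolve independently. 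Each is a discounted sum of past evidence with its own kernel $K^{(c)}_{\mathrm{time}}$. This independence justifies the word ``partitioned''.

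\textbf{Step 2: fast channels.} Take the constant-gate setting of Proposition~\ref{prop:horizon}, or more generally bound $\gamma_j^{(c)}\le\gamma_{th}$ for all $j$. Then every weight in $\mathcal{C}_{fast}$ is at most $\gamma_{th}^{t-i}=e^{-(t-i)/\tau_{th}}$ with $\tau_{th}=-1/\log\gamma_{th}$. By Proposition~\ref{prop:horizon}, evidence older than $3\tau_{fast}$ contributes less than $5\%$ of its original weight. Choosing $\gamma_{th}$ so that $\tau_{th}\ll T$ gives the short-term buffer claim.

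\textbf{Step 3: slow channels.} For $c\in\mathcal{C}_{slow}$ the weights are at least $\gamma_{th}^{t-i}$. Choose $\gamma_{th}$ so that $\tau_{th}\gg W$. Then any evidence within a span of order $W$, or even many windows, retains weight at least $e^{-W/\tau_{th}}\approx 1$. Such evidence therefore persists across window boundaries, giving long-term memory.

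For the two claims to be compatible, the single threshold must satisfy $W\ll\tau_{th}\ll T$. Where this fails, I would use two thresholds, or state the claim conditionally on the learned gate spectrum.

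\textbf{Main obstacle.} The main difficulty is that the statement is partly interpretive. ``Acts as memory'' is not a formal property, and ``learned end-to-end and adapts to data'' is an empirical observation, not a provable fact. Time-varying gates also complicate the clean horizon. My first attempt at this would be to define $\tau^{(c)}$ through $\sup_t\gamma_t^{(c)}$ for the fast set and $\inf_t\gamma_t^{(c)}$ for the slow set, and prove two-sided exponential envelopes. The end-to-end adaptation clause I would leave as a remark supported by Fig.~\ref{fig:gamma_1}, not as part of the proof.
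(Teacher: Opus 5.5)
Your argument is correct and is essentially the paper's own reasoning: the paper gives no separate proof and treats the corollary as an immediate channel-by-channel consequence of Proposition~\ref{prop:horizon}. Your explicit row decoupling of $\mathrm{diag}(\bgamma_t)\bS_{t-1}$, the monotone comparison with $\gamma_{th}^{t-i}$, the requirement $W\ll\tau_{th}\ll T$ on the threshold, and your flagging of the end-to-end clause as empirical make precise what the paper leaves informal. One small caution for your time-varying extension: defining $\mathcal{C}_{fast}$ via $\sup_t\gamma_t^{(c)}$ and $\mathcal{C}_{slow}$ via $\inf_t\gamma_t^{(c)}$ gives disjoint but not exhaustive sets, so the literal ``partition'' fails unless you assume each channel's gates stay on one side of $\gamma_{th}$.
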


\subsection{State Norm Boundedness}

A key practical concern is whether the persistent state $\bS_t$ remains bounded as $t \to \infty$.

\begin{proposition}[Bounded State Norm]
\label{prop:bounded}
Under the gated update $\bS_t = \mathrm{diag}(\bgamma_t)\bS_{t-1} + \bk_t\tilde{\bv}_t^\top$ with $\bar{\gamma} = \sup_{t,c}|\gamma_t^{(c)}| < 1$ and bounded inputs $\|\bk_t\| \leq B_k$, $\|\tilde{\bv}_t\| \leq B_v$ for all $t$, the Frobenius norm of the state is uniformly bounded:
\[
\|\bS_t\|_F \leq \bar{\gamma}^t \|\bS_0\|_F + \frac{B_k B_v}{1 - \bar{\gamma}}.
\]
In particular, $\limsup_{t\to\infty} \|\bS_t\|_F \leq B_k B_v / (1 - \bar{\gamma})$.
\end{proposition}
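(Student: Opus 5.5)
The plan is a direct one-step contraction argument followed by unrolling into a geometric series. First take Frobenius norms in the update $\bS_t = \mathrm{diag}(\bgamma_t)\bS_{t-1} + \bk_t\tilde{\bv}_t^\top$ and apply the triangle inequality. For the first term, left multiplication by a diagonal matrix scales row $c$ by $\gamma_t^{(c)}$. Hence
\[
\|\mathrm{diag}(\bgamma_t)\bS_{t-1}\|_F^2=\sum_c (\gamma_t^{(c)})^2\|\bS_{t-1}[c,:]\|_2^2\le \bar{\gamma}^2\|\bS_{t-1}\|_F^2 .
\]
This is the same operator-norm fact used in the proof of Proposition~\ref{prop:bounded_initial_state}, now combined with the inequality $\|AB\|_F\le\|A\|_{\mathrm{op}}\|B\|_F$. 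For the second term, the Frobenius norm of a rank-one outer product factors exactly: $\|\bk_t\tilde{\bv}_t^\top\|_F=\|\bk_t\|\,\|\tilde{\bv}_t\|\le B_kB_v$. Together these give the scalar recursion
\[
a_t\le\bar{\gamma}\,a_{t-1}+B_kB_v,\qquad a_t:=\|\bS_t\|_F .
\]

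Next, unroll the recursion by induction on $t$. This yields
\[
a_t\le\bar{\gamma}^t a_0+B_kB_v\sum_{i=0}^{t-1}\bar{\gamma}^{\,i} .
\]
Equivalently, one can take norms directly in the unrolled expression for $\bS_t$ from the proof of Proposition~\ref{prop:bounded_initial_state}, bounding each product of diagonal gates by $\bar{\gamma}^{t-i}$. Since $0\le\bar{\gamma}<1$, the finite geometric sum is at most $1/(1-\bar{\gamma})$, which gives the claimed uniform bound. Finally, let $t\to\infty$: the term $\bar{\gamma}^t\|\bS_0\|_F\to 0$, so taking the $\limsup$ gives the asymptotic statement.

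There is no real obstacle here. The only point needing care is justifying $\|\mathrm{diag}(\bgamma)\bS\|_F\le\bar{\gamma}\|\bS\|_F$, which needs $\bar{\gamma}$ to bound $|\gamma^{(c)}|$ uniformly over channels and times. This is exactly the hypothesis, and the row-wise computation above handles it.

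A second minor point concerns the feature map. The statement writes the write term as $\bk_t$ rather than $\phi(\bk_t)$ as in \eqref{eq:gate_update}. I would note that the same argument applies verbatim with $B_k$ taken as a bound on $\|\phi(\bk_t)\|$.
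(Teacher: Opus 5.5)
Your proposal is correct and follows the paper's proof essentially step for step. Like the paper, you use the triangle inequality and $\|\mathrm{diag}(\bgamma_t)\|_{\mathrm{op}}\le\bar{\gamma}$ to get $\|\bS_t\|_F\le\bar{\gamma}\|\bS_{t-1}\|_F+B_kB_v$, then unroll and bound the geometric sum by $1/(1-\bar{\gamma})$; your row-wise justification of the contraction and the remark about $\phi(\bk_t)$ add useful precision.
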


\begin{proof}
By submultiplicativity and the triangle inequality:
\begin{align*}
\|\bS_t\|_F &\leq \|\mathrm{diag}(\bgamma_t)\|_{\text{op}} \|\bS_{t-1}\|_F + \|\bk_t\tilde{\bv}_t^\top\|_F \\
&\leq \bar{\gamma}\|\bS_{t-1}\|_F + B_k B_v.
\end{align*}
Unrolling the recurrence:
$\|\bS_t\|_F \leq \bar{\gamma}^t\|\bS_0\|_F + B_k B_v \sum_{i=0}^{t-1}\bar{\gamma}^i \leq \bar{\gamma}^t\|\bS_0\|_F + B_k B_v/(1 - \bar{\gamma})$.
\end{proof}

\begin{remark}
The bound in Proposition~\ref{prop:bounded} guarantees numerical stability without periodic state resets.
In contrast, ungated linear attention ($\bar{\gamma} = 1$) yields $\|\bS_t\|_F \leq \|\bS_0\|_F + t \cdot B_k B_v$, which grows linearly and eventually requires resets to prevent overflow, as observed in TTT-based methods~\citep{zhang2026loger}.
\end{remark}

\subsection{Formal Connection to Test-Time Training}

We formalize the relationship between Geometric Linear Attention and TTT.

\begin{proposition}[Geometric Linear Attention as Discounted TTT]
\label{prop:ttt}
Consider a linear model $f_{\bS}(\bk) = \bS^\top\bk$ trained online to minimize $\ell_t = \|\bS^\top\bk_t - \bv_t\|^2$ with the discounted objective~\eqref{eq:objective}.
One step of gradient descent at learning rate $\eta$ on the discounted objective, starting from the previous iterate $\bS_{t-1}$ discounted by $\gamma_t$, produces:
\[
\bS_t = \gamma_t \bS_{t-1} - \frac{\eta}{2}\nabla_\bS\ell_t\big|_{\bS=\gamma_t\bS_{t-1}} = \gamma_t \bS_{t-1} + \eta\bk_t(\bv_t - \gamma_t\bS_{t-1}^\top\bk_t)^\top.
\]
When $\gamma_t \equiv 1$, this reduces to the standard online linear regression update, which \citet{liu2026tttla} showed is equivalent to linear attention.
The gated form thus extends the TTT--linear-attention equivalence to the discounted setting: \emph{Geometric Linear Attention is equivalent to discounted test-time training}.
\end{proposition}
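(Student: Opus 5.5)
The computation is direct: differentiate the single-step loss, evaluate it at the discounted iterate, and match the resulting update against \eqref{eq:linear_update} and the TTT--linear-attention result of \citet{liu2026tttla}.

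First, compute the gradient of $\ell_t(\bS)=\|\bS^\top\bk_t-\bv_t\|_2^2$. Write $\bS^\top\bk_t-\bv_t$ as a residual $\mathbf{r}$. The directional derivative in direction $\Delta$ is $2\,\mathbf{r}^\top\Delta^\top\bk_t=2\,\mathrm{tr}(\Delta^\top\bk_t\mathbf{r}^\top)$. Hence $\nabla_\bS\ell_t=2\,\bk_t(\bS^\top\bk_t-\bv_t)^\top$.

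Next, interpret the discounted objective \eqref{eq:objective} through its recursion \eqref{eq:recursive_objective}. The factor $\gamma_t$ multiplies the accumulated past loss $\mathcal{J}_{t-1}$. In the online (one-step) approximation, the previous solution's contribution is represented by shrinking the iterate, giving $\gamma_t\bS_{t-1}$. The new term $\ell_t$ is then handled by one gradient step evaluated at that shrunk iterate. I will state this as the modelling choice the proposition specifies ("starting from the previous iterate discounted by $\gamma_t$"), not derive it as an exact minimizer.

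Substituting $\bS=\gamma_t\bS_{t-1}$ into the gradient gives
\[
-\tfrac{\eta}{2}\nabla_\bS\ell_t=\eta\,\bk_t\bigl(\bv_t-\gamma_t\bS_{t-1}^\top\bk_t\bigr)^\top,
\]
which yields the displayed update. Comparing with \eqref{eq:linear_update}, this is exactly the gated fixed-state form with $\phi(\bk_t)=\bk_t$ and write-in value $\tilde{\bv}_t=\eta(\bv_t-\gamma_t\bS_{t-1}^\top\bk_t)$. So the TTT step is an instance of Geometric Linear Attention with a delta-rule value. The channel-wise case \eqref{eq:gate_update} follows verbatim by replacing $\gamma_t$ with $\mathrm{diag}(\bgamma_t)$.

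Finally, setting $\gamma_t\equiv1$ recovers $\bS_t=\bS_{t-1}+\eta\bk_t(\bv_t-\bS_{t-1}^\top\bk_t)^\top$, the standard online regression step. Invoking \citet{liu2026tttla} then gives the equivalence with linear attention in that case.

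The main obstacle is rhetorical rather than computational. The step "discount then take one gradient step" is an approximation to minimizing \eqref{eq:objective}, not its exact solution. The claimed "equivalence" therefore holds at the level of the update rule, not of the optimizer. I would make this explicit in one sentence. I would also note that equivalence with plain linear attention (rather than a delta rule) requires treating $\tilde{\bv}_t$ as a learned value, exactly as in \eqref{eq:linear_update}.
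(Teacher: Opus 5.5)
Your proposal is correct and matches the paper's proof: compute $\nabla_\bS\ell_t = 2\bk_t(\bS^\top\bk_t-\bv_t)^\top$, evaluate it at the discounted iterate $\gamma_t\bS_{t-1}$, take the step with factor $\eta/2$, and set $\gamma_t=1$ to invoke \citet{liu2026tttla}; your two caveats are fair refinements that the paper leaves implicit, namely that this is a gradient step on $\ell_t$ from the shrunk iterate rather than an exact minimization of \eqref{eq:objective}, and that the resulting update is a delta rule rather than plain linear attention. In your channel-wise aside, starting from $\mathrm{diag}(\bgamma_t)\bS_{t-1}$ gives the correction $\eta\bk_t(\bv_t-\bS_{t-1}^\top\mathrm{diag}(\bgamma_t)\bk_t)^\top$, so the diagonal must sit between $\bS_{t-1}^\top$ and $\bk_t$ rather than replace the scalar $\gamma_t$ in place.
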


\begin{proof}
The gradient of $\ell_t$ at $\bS' = \gamma_t\bS_{t-1}$ is $\nabla_\bS\ell_t|_{\bS'} = 2\bk_t(\bS'^\top\bk_t - \bv_t)^\top$.
Gradient descent: $\bS_t = \bS' - (\eta/2)\nabla\ell_t|_{\bS'} = \gamma_t\bS_{t-1} + \eta\bk_t(\bv_t - \gamma_t\bS_{t-1}^\top\bk_t)^\top$.
Setting $\gamma_t = 1$ recovers $\bS_t = \bS_{t-1} + \eta\bk_t(\bv_t - \bS_{t-1}^\top\bk_t)^\top$, the undiscounted TTT/linear-attention update of \citet{liu2026tttla}.
\end{proof}

\begin{table}[t]
  \centering
  \caption{Training data composition. We use dataset-specific sampling ratios in two stages. Stage~1 emphasizes data diversity, while Stage~2 increases the proportion of metric-scale datasets.}
  \label{tab:training_data_ratio}
  \setlength{\tabcolsep}{5pt}
  \renewcommand{\arraystretch}{1.08}
  \resizebox{0.6\linewidth}{!}{
  \begin{tabular}{lccc}
  \toprule
  Dataset & Stage 1 Ratio & Stage 2 Ratio & Metric Scale \\
  \midrule
  blendedmvs/train & 4.90\% & 2.40\% & \xmark \\
  megadepth & 1.00\% & -- & \xmark \\
  hypersim/train & 7.30\% & 5.10\% & \cmark \\
  hypersim/val & 1.50\% & 1.40\% & \cmark \\
  ase & 9.80\% & 9.50\% & \cmark \\
  scannetpp & 7.30\% & 6.10\% & \cmark \\
  tartanair & 7.30\% & 6.10\% & \cmark \\
  vkitti2 & 9.80\% & 9.50\% & \cmark \\
  mapillary & 9.80\% & 9.50\% & \cmark \\
  waymo & 9.80\% & 9.50\% & \cmark \\
  wildrgbd/train & 7.30\% & 7.10\% & \cmark \\
  co3dv2/train & 7.30\% & -- & \xmark \\
  dl3dv & 9.80\% & 9.50\% & \xmark \\
  mapfree/train & 4.90\% & 4.70\% & \cmark \\
  replica\_niceslam & 0.50\% & 0.50\% & \xmark \\
  7scenes & 0.50\% & 0.50\% & \xmark \\
  GTAV\_1080 & 1.00\% & 0.90\% & \xmark \\
  spring & 0.50\% & -- & \xmark \\
  point\_odyssey/train & -- & 0.50\% & \xmark \\
  point\_odyssey/val & -- & 0.50\% & \xmark \\
  ARKitscenes & -- & 0.50\% & \cmark \\
  unrealstereo4k & -- & 0.90\% & \xmark \\
  OmniWorld & -- & 4.70\% & \cmark \\
  matrixcity\_d2 aerial & -- & 2.40\% & \cmark \\
  matrixcity\_d2 street & -- & 2.40\% & \cmark \\
  Internal Long-Sequence Data  & -- & 4.00\% & \cmark \\
  \bottomrule
  \end{tabular}
  }
\end{table}

\section{Implementation Details}
\label{app:arch}

\subsection{Architecture Details}

The backbone consists of 24 transformer layers alternating between frame blocks and global blocks. Geometric Linear Attention layers are placed at layers 4, 11, 17, and 23. Each Geometric Linear Attention layer reads and updates the persistent state before Geometric Local Attention operates. The head-wise gate bias is initialized to 2.0 to preserve pretrained attention at the start of training. Geometric Linear Attention gates are initialized with high bias to produce $\gamma \approx 1$, gradually learning channel-wise retention as training progresses.

The pose consensus head uses a lightweight transformer with residual corrections over $K$ rounds. Each round refines the translation, rotation quaternion, and focal length. The depth head uses DPT-style multi-scale fusion from four intermediate layers.

\subsection{Training Hyperparameters}
\label{app:hyperparams}

\noindent \textbf{Input and model dimensions.}
Input images are resized to $518{\times}518$. The Geometric Linear Attention state has dimension $\bS \in \mathbb{R}^{d \times d}$ with $d{=}1024$. 

Scale loss is applied only on metric-scale samples. Depth loss uses SmoothL1 with confidence weighting. We apply random color jitter, random cropping, and random horizontal flip. 

\subsection{Training Data}
\label{app:training_data}

We train on 24 datasets covering indoor, outdoor, driving, and synthetic environments. Video data is sampled with variable temporal stride from 1 to 8. Unordered image collections are converted to pseudo-temporal sequences via camera graph traversal. Tab.~\ref{tab:training_data_ratio} lists per-dataset sampling ratios.

\begin{table*}[t]
\centering
\caption{Quantitative comparison on Oxford Spires. We report ATE, where lower is better.}
\label{tab:oxford_spires_ate}
\setlength{\tabcolsep}{2.3pt}
\renewcommand{\arraystretch}{1.08}
\newcommand{\seqinfo}[2]{%
{\scriptsize\textcolor{cyan!70!black}{#1 fr.}}\\[-1pt]
{\scriptsize\textcolor{red!70!black}{#2 m}}%
}
\resizebox{\textwidth}{!}{
\begin{tabular}{c l ccccccccccccc}
\toprule
& \multirow{2}{*}{Method}
& \multicolumn{12}{c}{Oxford Spires ATE $\downarrow$}
& \multirow{2}{*}{Avg.} \\
\cmidrule(lr){3-14}
&
& \makecell{college2\\\seqinfo{787}{290}}
& \makecell{college3\\\seqinfo{757}{280}}
& \makecell{college4\\\seqinfo{701}{773}}
& \makecell{college5\\\seqinfo{636}{696}}
& \makecell{observ1\\\seqinfo{353}{393}}
& \makecell{observ2\\\seqinfo{351}{387}}
& \makecell{blenheim1\\\seqinfo{57}{341}}
& \makecell{blenheim2\\\seqinfo{25}{316}}
& \makecell{blenheim5\\\seqinfo{12}{259}}
& \makecell{christ2\\\seqinfo{567}{629}}
& \makecell{christ3\\\seqinfo{289}{309}}
& \makecell{bodleian2\\\seqinfo{22}{537}}
& \\
\midrule

\multirow{6}{*}{\rotatebox[origin=c]{90}{Opt.-based}}
& MASt3R-SLAM & 15.97 & 31.89 & -- & -- & 20.05 & 21.44 & -- & 45.48 & 50.62 & -- & -- & -- & 37.73 \\
& VGGT-SLAM & -- & 13.53 & 14.64 & -- & 29.12 & -- & 40.04 & 19.20 & 10.36 & -- & -- & 80.54 & 31.00 \\
& COLMAP & 0.06 & 0.05 & 0.05 & 0.32 & 0.17 & 0.26 & 0.21 & 0.06 & -- & 39.55 & 14.99 & -- & 15.57 \\
& MASt3R-SfM & 21.55 & 13.57 & 32.53 & 36.84 & 23.14 & 27.35 & 25.84 & 37.22 & 47.80 & 35.55 & 14.71 & 69.48 & 32.13 \\
& DPVO & 14.31 & 31.60 & 39.26 & 34.79 & 26.91 & 28.14 & 35.37 & 44.76 & 47.96 & 19.98 & 16.00 & 69.31 & 34.03 \\
& DROID-SLAM & 20.96 & 23.09 & 20.39 & 38.30 & 17.20 & 23.86 & 30.39 & 46.78 & 47.08 & 16.97 & 16.04 & 71.85 & 31.08 \\

\midrule

\multirow{3}{*}{\rotatebox[origin=c]{90}{Offline}}
& VGGT-Long & 6.55 & 14.32 & 13.20 & 40.27 & 11.95 & 6.47 & 24.77 & 19.20 & 10.36 & 19.40 & 15.75 & 80.54 & 21.90 \\
& FastVGGT & 24.10 & 32.54 & 39.93 & 37.07 & 26.86 & 26.57 & 33.04 & -- & 38.14 & 40.90 & 15.79 & 78.52 & 36.58 \\
& LoGeR & 6.76 & 9.37 & 5.46 & 7.66 & 6.16 & 5.62 & 26.82 & 26.90 & 32.92 & 19.33 & 3.91 & 73.46 & 18.70 \\
& LoGeR* & 6.76 & 9.37 & 5.46 & 7.66 & 6.16 & 5.62 & 26.82 & 26.90 & 32.92 & 19.33 & 3.91 & 73.46 & 18.70 \\
\midrule

\multirow{8}{*}{\rotatebox[origin=c]{90}{Online Fwd.}}
& CUT3R & 30.68 & 23.73 & 31.31 & 30.80 & 25.32 & 26.21 & 31.15 & 37.09 & 38.71 & 37.25 & 14.33 & 62.68 & 32.44 \\
& TTT3R & 27.75 & 23.39 & 40.99 & 42.39 & 28.17 & 26.73 & 37.07 & 44.93 & 38.34 & 36.69 & 14.92 & 73.21 & 36.21 \\
& STream3R & 33.08 & 32.68 & 43.54 & 41.82 & 28.40 & 28.37 & 36.21 & 41.72 & 31.96 & 44.98 & 15.47 & 72.60 & 37.57 \\
& StreamVGGT & 31.97 & 31.09 & 43.55 & 42.15 & 29.09 & 28.24 & 36.08 & 42.57 & 30.16 & 41.23 & 15.74 & 75.18 & 37.25 \\
& InfiniteVGGT & 25.71 & 27.24 & 27.94 & 28.33 & 25.81 & 24.04 & 35.69 & 44.49 & 19.33 & 38.59 & 12.83 & 71.81 & 31.82 \\
& LongStream & 19.69 & 10.06 & 13.49 & 30.49 & 18.29 & 14.25 & 30.92 & 14.54 & 16.45 & 23.45 & 20.33 & 79.54 & 19.82 \\
& Lingbot-Map & 2.17 & 3.61 & 19.99 & 12.01 & 9.99 & 6.23 & 8.23 & 14.59 & 39.67 & 15.79 & 12.86 & 40.38 & 15.46 \\
& Ours & 2.81 & 0.76 & 10.87 & 2.84 & 1.43 & 2.17 & 6.71 & 11.62 & 33.53 & 4.7 & 1.68 & 31.59 & 9.38 \\

\bottomrule
\end{tabular}
}
\end{table*}

\begin{figure}[t]
  \centering
  \includegraphics[width=0.5\linewidth]{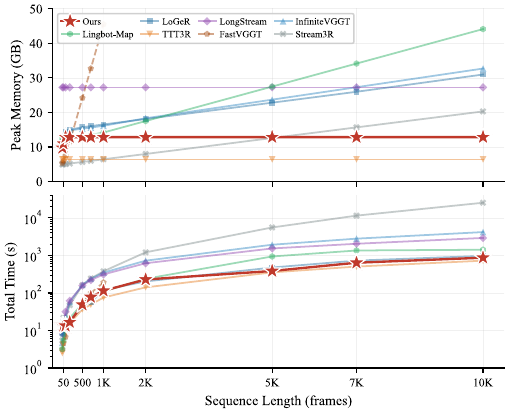}
  \vspace{-8pt}
  \caption{Memory and runtime scaling. HorizonStream keeps peak memory nearly constant and scales smoothly to $10\mathrm{K}$ frames, while competing methods require increasing memory or higher runtime on long sequences.}
  \label{ab:time_memory}
\end{figure}

\section{Evaluation Dataset Details}
\label{app:eval_data}

We evaluate all sequences at full length without frame subsampling. Below we describe per-dataset evaluation protocols.

\noindent \textbf{KITTI.} All 11 sequences (00--10) are evaluated with full frames.

\noindent \textbf{vKITTI2.} We evaluate all morning-condition scenes across the five virtual environments (Scene01, Scene02, Scene06, Scene18, Scene20).

\noindent \textbf{7Scenes.} For each of the seven scenes (Chess, Fire, Heads, Office, Pumpkin, RedKitchen, Stairs), we evaluate on sequence~01.

\noindent \textbf{Waymo Open.} We select 9 segments not present in our training set:
163453191 (198 frames, 160\,m),
183829460 (199 frames, 42\,m),
315615587 (199 frames, 165\,m),
346181117 (199 frames, 351\,m),
371159869 (196 frames, 273\,m),
405841035 (199 frames, 86\,m),
460417311 (198 frames, 266\,m),
520018670 (199 frames, 135\,m),
610454533 (198 frames, 63\,m).
Although Waymo is part of our training data, these specific segments are held out to evaluate generalization on unseen driving scenes.

\noindent \textbf{ScanNet++.} We evaluate on 5 scenes: 419cbe7c11, 98b4ec142f, bb87c292ad, c24f94007b, ebc200e928.

\noindent \textbf{Oxford Spires.} We evaluate all 14 subsets. Since the ground-truth point clouds and images are in different coordinate systems, we perform image-to-ground-truth point cloud alignment. The number of aligned images varies across subsets, increasing evaluation difficulty. Per-sequence frame counts and trajectory lengths are shown in Tab.~\ref{tab:oxford_spires_ate}.

\begin{figure}[t]
  \centering
  \includegraphics[width=0.42\columnwidth]{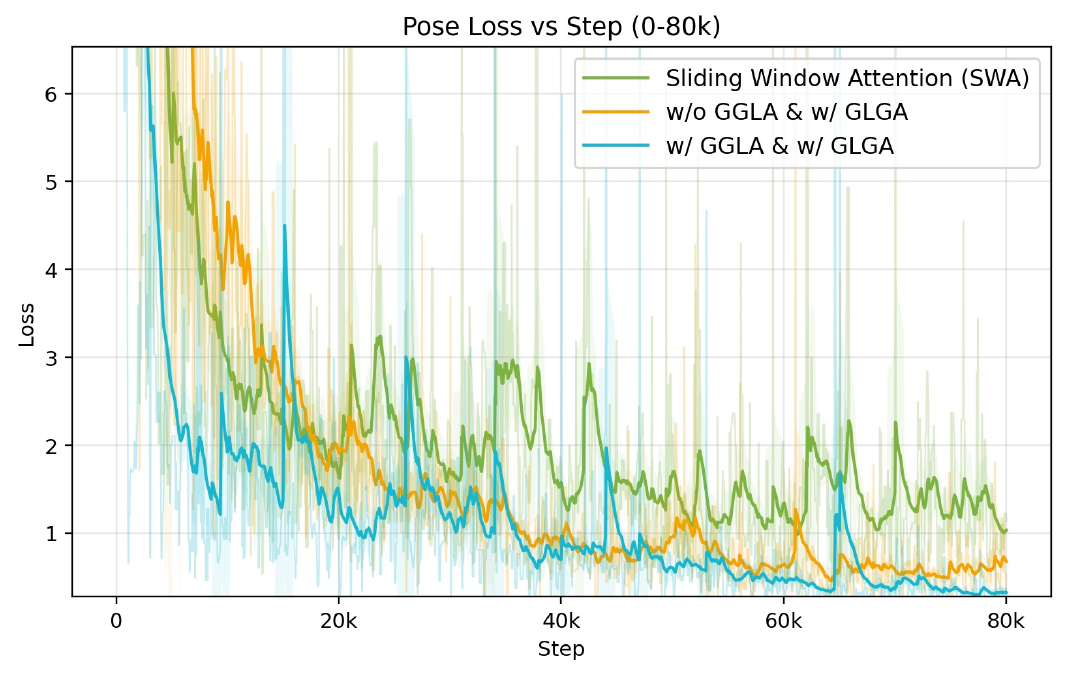}
  \vspace{-8pt}
  \caption{Training convergence under different attention mechanisms for cross-window propagation. Geometric Linear Attention with channel-wise gating converges faster and reaches a lower final loss.}
  \label{ab:loss}
  \vspace{-8pt}
\end{figure}

\noindent \textbf{VBR.} Following the LoGeR~\citep{zhang2026loger} setting, all 7 sequences are evaluated at full length (8{,}815 to 18{,}846 frames, up to 5.2\,km)

\begin{figure}[t]
  \centering
  \includegraphics[width=0.5\linewidth]{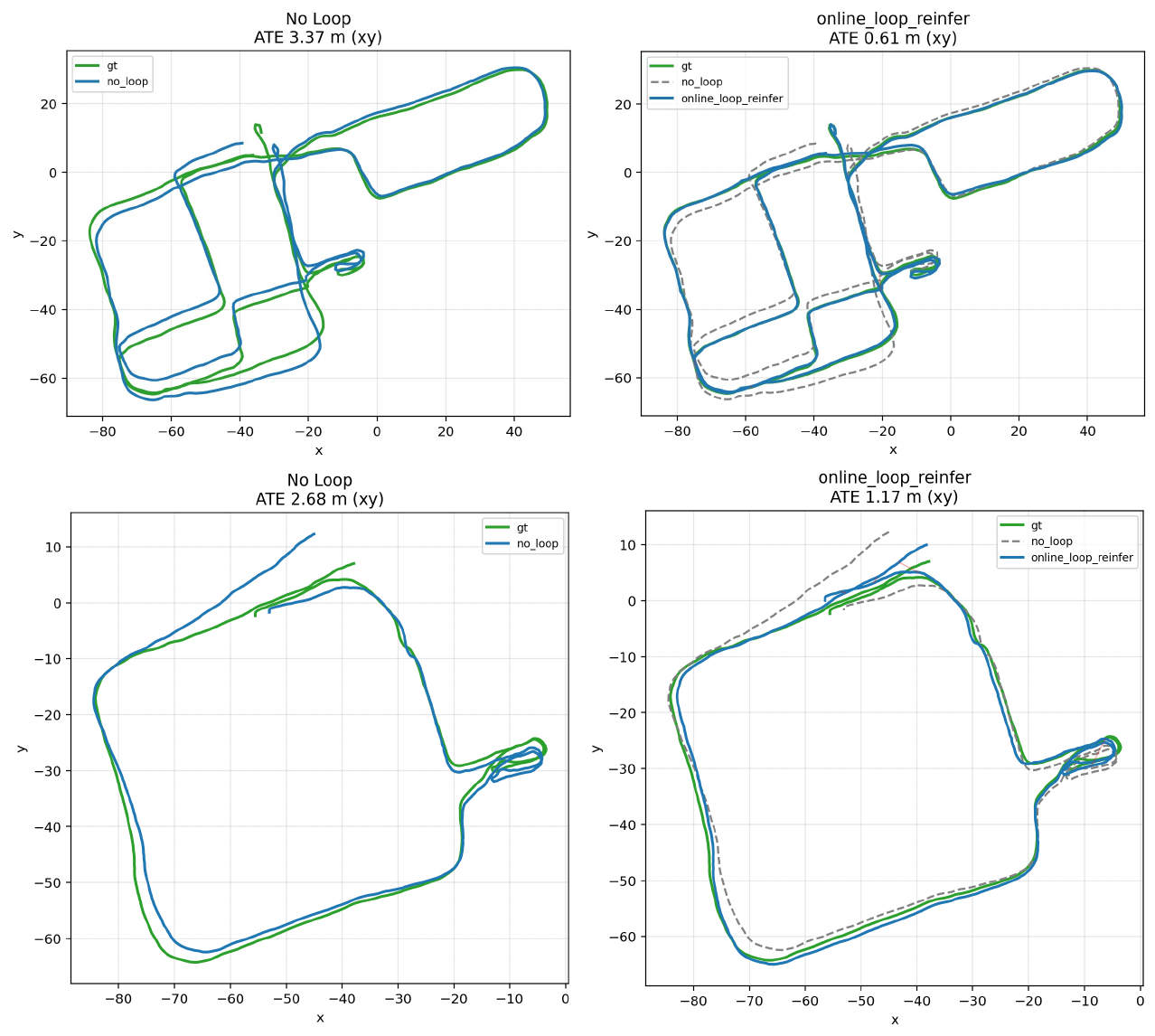}
  \vspace{-8pt}
  \caption{Effect of loop closure on long sequences. Loop closure reduces ATE on sequences with revisited regions while maintaining performance elsewhere.}
  \label{ab:lc}
\end{figure}

\noindent \textbf{TUM RGB-D and ETH3D.} Standard evaluation protocols with full sequences.

\section{Additional Experimental Results}
\label{app:results}

Tab.~\ref{tab:oxford_spires_ate} reports per-sequence ATE on the 12 Oxford Spires evaluation sites. HorizonStream achieves the lowest average ATE among all online methods, with particularly large margins on long-trajectory sequences such as college5 and christ2.

\noindent\textbf{Loop closure.}
Fig.~\ref{ab:lc} shows the effect of the optional loop-closure module on long sequences.
Loop closure reduces ATE on sequences with revisited regions while maintaining comparable performance.

\noindent\textbf{Memory and runtime scaling.}
Fig.~\ref{ab:time_memory} reports peak GPU memory and wall-clock time as sequence length grows from 200 to 10{,}000 frames.
HorizonStream maintains nearly constant peak memory and scales smoothly, while competing methods either run out of memory or exhibit super-linear runtime growth.

\noindent\textbf{Training convergence.}
Fig.~\ref{ab:loss} compares training loss curves when using different attention mechanisms for cross-window propagation.
Geometric Linear Attention with channel-wise gating converges faster and reaches a lower final loss than the ungated and softmax-attention variants, reflecting the benefit of bounded multi-timescale retention during training.

\noindent\textbf{Failure cases.}
Fig.~\ref{ab:failure} shows representative failure cases on ultra-long sequences.
Errors mainly occur in sequences with dense revisits or visually ambiguous regions, where the fixed-size recurrent state does not preserve sufficient fine-grained information for precise relocalization.
The optional loop-closure module partially mitigates these failures.

\begin{figure}[t]
  \centering
  \includegraphics[width=\columnwidth]{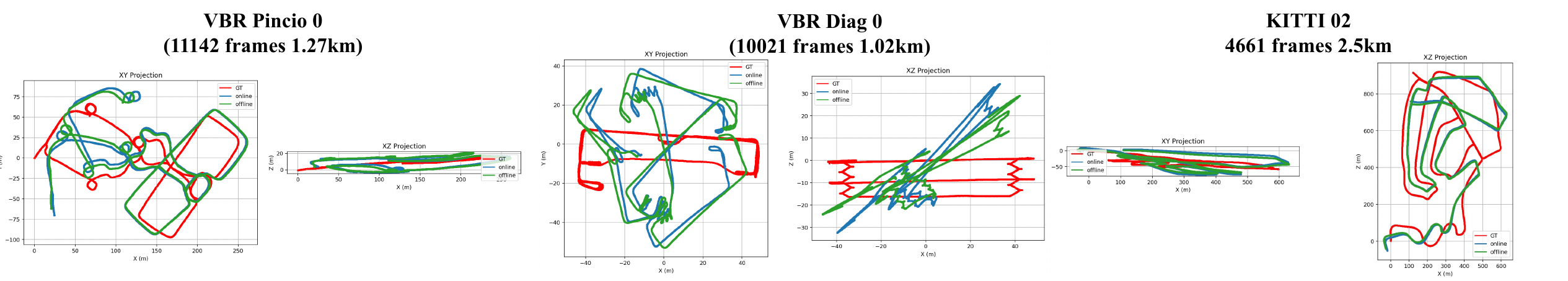}
  \vspace{-8pt}
  \caption{Failure cases on ultra-long sequences. Ground-truth trajectory (red), online prediction (blue), and loop-closure refined trajectory (green). Failures occur mainly in sequences with dense revisits or visually ambiguous regions.}
  \label{ab:failure}
  \vspace{-8pt}
\end{figure}

\begin{figure}[t!]
  \centering
  \begin{minipage}[t]{0.57\textwidth}
    \centering
    \includegraphics[width=\textwidth]{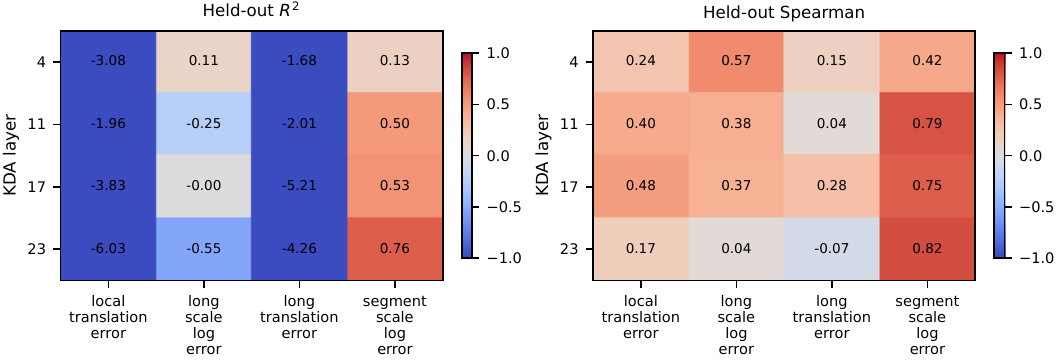}
    \caption{Linear probing of frozen Geometric Linear Attention states. Ridge regressors trained on chunk-level state features predict four geometric error targets. Segment-scale log error is the most reliably predictable, suggesting that the recurrent state encodes measurable metric information. Other targets are less consistently captured by a linear probe.}
    \label{fig:probe_performance}
  \end{minipage}
  \hfill
  \begin{minipage}[t]{0.41\textwidth}
    \centering
    \includegraphics[width=\textwidth]{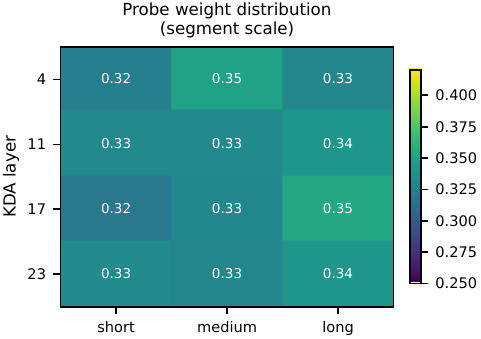}
    \caption{Probe weight attribution by retention band (segment scale target). Weights are nearly uniform across short / medium / long, showing that scale signal is distributed rather than band-specific.}
    \label{fig:probe_bands}
  \end{minipage}
\end{figure}
\subsection{Channel-to-Geometry Linear Probing}
\label{app:probe}

\noindent \textbf{Linear probing of recurrent geometric states.}
We further examine whether frozen Geometric Linear Attention states contain linearly decodable geometric information. 
For each chunk, we extract 1024-dimensional state features from all four Geometric Linear Attention layers.
We train ridge regressors on KITTI sequences \texttt{00} and \texttt{02}, and evaluate on the held-out sequence \texttt{05}. 
The probes predict four geometric error targets: local translation error, long-range scale log error, long-range translation error, and segment scale log error.

Fig.~\ref{fig:probe_performance} shows that segment-scale log error is the most reliably predictable target from frozen Geometric Linear Attention states, suggesting that the recurrent state contains measurable metric-related information.

Fig.~\ref{fig:probe_bands} analyzes where the segment-scale signal resides across retention bands. 
The probe weights are distributed across short-, medium-, and long-retention channels, rather than being concentrated in a single band. 
This supports the view that metric evidence is represented across the learned retention spectrum and benefits from multi-timescale propagation.

\clearpage

\end{document}